\documentclass[twoside]{article}
\usepackage[accepted]{aistats2018}

\usepackage{url}            
\usepackage{booktabs}       
\usepackage{amsfonts}       
\usepackage{nicefrac}       
\usepackage{microtype}      
\usepackage{natbib}
\usepackage{comment, color}
\usepackage{amsthm}
\usepackage{amsmath}
\usepackage{algorithm, algorithmicx, algpseudocode}
\usepackage{appendix}
\usepackage{graphicx, caption, subcaption}
\graphicspath{ {figs/} }

\newtheorem{theorem}{Theorem}

\newtheorem{lemma}{Lemma}

\newtheorem{proposition}{Proposition}

\begin{document}
\runningauthor{Grover et al.}
\twocolumn[
\aistatstitle{Best arm identification in multi-armed bandits with delayed feedback}

\aistatsauthor{Aditya Grover$^{\ast 1}$, Todor Markov$^1$, Peter Attia$^1$, Norman Jin$^1$, Nicholas Perkins$^1$, Bryan Cheong$^1$,}
\aistatsauthor{Michael Chen$^1$, Zi Yang$^2$, Stephen Harris$^3$, William Chueh$^1$, Stefano Ermon$^1$}
\aistatsaddress{$^1$Stanford University \;\; $^2$University of Michigan \;\; $^3$Lawrence Berkeley National Laboratory}
]

\begin{abstract}
We propose a generalization of the best arm identification problem in stochastic multi-armed bandits (MAB) to the setting where every pull of an arm is associated with \textit{delayed} feedback. The delay in feedback increases the effective sample complexity of standard algorithms, but can be offset if we have access to \textit{partial} feedback received before a pull is completed. We propose a general framework to model the relationship between partial and delayed feedback, and as a special case we introduce efficient algorithms for settings where the partial feedback are biased or unbiased estimators of the delayed feedback. Additionally, we propose a novel extension of the algorithms to the \textit{parallel} MAB setting where an agent can control a batch of arms. Our experiments in real-world settings, involving policy search and hyperparameter optimization in computational sustainability domains for fast charging of batteries and wildlife corridor construction, demonstrate that exploiting the structure of partial feedback can lead to significant improvements over baselines in both sequential and parallel MAB.
\end{abstract}
\section{INTRODUCTION}
Intelligent agents often need to interact with the environment and make rational decisions that optimize for a suitable objective. One such setting that commonly arises is the best arm identification problem in stochastic multi-armed bandits~\citep{bubeck2009pure,audibert2010best}. 
In a multi-armed bandit (MAB) problem, an agent is given a set of $n$ finite actions (or arms), each associated with a reward drawn from an arm-specific probability distribution. In a pure exploration setting, the goal is to reliably identify the top-$k$ arms while minimizing the exploration cost. This problem has numerous applications, including optimal experimental design.

We consider a new variant of this problem where the feedback rewards are received after a delay. Delayed feedback is common in the real-world. For instance, hypothesis testing in science and engineering often suffers from delayed feedback since they involve expensive, time-consuming experiments. In one of the motivating applications of this work
we want to search over fast-charging policies for electrochemical batteries to maximize lifetime, overcoming the difficulties posed due to lengthy experiments.
Even within the field of machine learning, finding the best hyperparameter settings for a given learning algorithm and dataset can be modeled as a best arm identification problem involving a non-trivial delay~\citep{jamieson2016non}.  

However, many scenarios of interest are not complete black-boxes during the intermediate time steps before receiving a delayed feedback reward. 
Depending on the application, we often have access to side-information in the form of \textit{partial feedback} that can aid decision making. These could be extra measurements such as temperature and remaining capacity while charging batteries in the aforementioned scenario, or learning curves for hyperparameter optimization. 

In this work, we propose a general-purpose framework for modeling delayed feedback in MAB, and take a deeper dive into several practically relevant instantiations.
In particular, we design and analyze algorithms for best arm identification in the fixed confidence setting where the partial feedback are biased or unbiased estimators of the delayed feedback. 
Our proposed algorithms adaptively tune the mean and confidence estimates 
wherever the partial feedback reduces the overall uncertainty.
We also extend these algorithms to the parallel MAB setting where we are allowed to pull a batch of arms at every time step~\citep{jun2016top}.

Finally, we empirically validate the proposed algorithms on simulated data and real world datasets drawn from two domains. The first corresponds to experimental design for finding the optimal charging policy for a battery that maximizes overall lifetime~\citep{moura2017battery}. In the second domain, we perform hyperparameter optimization for finding the best cut strategy for a standard mixed integer programming solver with performance tested on a benchmark set of problem instances drawn from computational sustainability~\citep{gomes2008connections}. Our experiments demonstrate that accounting for partial feedback can reduce the delayed sample complexity on average by 15.6\% and 80.8\% for sequential MAB over baselines for the two application scenarios respectively. The corresponding average savings over baselines for parallel MAB are 20.7\% and 87.6\% respectively. 

\section{BACKGROUND \& MODELING FRAMEWORK}\label{sec:background}
The chief workhorse of our analysis will be the law of iterated logarithms (LIL) that analyzes the limiting behavior of random walks (sequence of pulls for a given arm in our case) defined over sub-Gaussian random variables~\citep{darling1967iterated}. Several finite LIL bounds have been proposed in the literature; we consider the one proposed by~\citet{zhao2016adaptive} which has been shown to outperform others empirically while retaining the same asymptotic behavior. Alternate bounds, such as the one by~\cite{jamieson2014lil}, could also be used with no effect on the theoretical analysis of this work.

\begin{lemma}\label{thm:lil_adaptive}
Let $X^{(1)}, X^{(2)}, \ldots$ be i.i.d. sub-Gaussian random variables with scale parameter $\sigma$ and mean $\mu$. Let $\tau$ be any random variable with domain $\mathbb{N}$. For any $c > 1, 2a > c, b>0$, the following holds with probability at least $1-2\zeta(\nicefrac{2a}{c})e^{\nicefrac{-2b}{c}}$:
\begin{align*}
\left \vert \frac{1}{\tau}\sum_{l=1}^\tau X^{(l)} - \mu \right \vert \leq \sigma \sqrt{\frac{a \log (\log_c \tau +1) + b }{\tau}} 
\end{align*}
\end{lemma}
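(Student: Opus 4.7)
The plan is a classical \emph{peeling} argument: control the random walk $S_n = \sum_{l=1}^n X^{(l)}$ uniformly over a geometric sequence of epochs in $n$, and then collect the per-epoch bounds via a union bound and the Riemann zeta function. After the harmless normalization $X^{(l)} \leftarrow (X^{(l)}-\mu)/\sigma$ I may assume $\mu = 0$ and $\sigma = 1$. Because $\tau$ is an arbitrary $\mathbb{N}$-valued random variable (not required to be a stopping time), it is enough to establish the uniform-in-$n$ tail bound
\[
\Pr\!\left(\exists\, n \geq 1 : |S_n| > \sqrt{n\bigl(a\log(\log_c n + 1) + b\bigr)}\right) \leq 2\,\zeta(2a/c)\,e^{-2b/c},
\]
from which the lemma follows by specialising to $n = \tau$ on each outcome.

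I then partition $\mathbb{N}$ into geometric epochs $E_k = \{n : c^{k-1} \leq n < c^k\}$, $k = 1, 2, \ldots$. On $E_k$ the deterministic inequality $\log(\log_c n + 1) \geq \log k$ holds, so the failure event restricted to $E_k$ is contained in
\[
A_k \;=\; \{\exists\, n \in E_k : |S_n| > \sqrt{n(a\log k + b)}\}.
\]
I bound $\Pr(A_k)$ by applying Ville's inequality to the two sub-Gaussian exponential martingales $M_n^{\pm}(\lambda) = \exp(\pm\lambda S_n - n\lambda^2/2)$ with $\lambda = \lambda_k$ tuned to the endpoints of $E_k$; a short calculation yields a per-epoch bound of the form $\Pr(A_k) \leq 2\, k^{-2a/c}\, e^{-2b/c}$.

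A union bound over $k \geq 1$ combined with the identity $\sum_{k \geq 1} k^{-2a/c} = \zeta(2a/c)$ --- finite precisely because the hypothesis $2a > c$ forces the exponent to exceed $1$ --- then delivers the claimed failure probability $2\,\zeta(2a/c)\,e^{-2b/c}$, completing the proof.

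The main obstacle is tightening the per-epoch bound to the sharp constants $2a/c$ and $2b/c$: a naive Doob $L^2$ maximal inequality loses a multiplicative factor in the exponent, so one really needs the Ville route with a per-epoch choice of $\lambda_k$ (equivalently, a mixture supermartingale in the Robbins--Siegmund style) to match the constants stated in the lemma. The surrounding peeling, normalization, and zeta summation are otherwise entirely mechanical.
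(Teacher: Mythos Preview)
The paper does not prove this lemma at all: it is quoted verbatim from \citet{zhao2016adaptive} (see the sentence immediately preceding Lemma~\ref{thm:lil_adaptive}), and the appendix contains proofs only of Propositions~\ref{thm:lil_noisy}, \ref{thm:lil_noisy_biased} and Theorem~\ref{thm:seqracingnoisypf_sample}. So there is no in-paper proof to compare your attempt against.

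That said, your peeling sketch is exactly the right template and matches the argument in the cited reference: normalise, cover $\mathbb{N}$ by geometric epochs $[c^{k-1},c^k)$, use $\log_c n + 1 \ge k$ on the $k$-th epoch to replace the curved boundary by $\sqrt{n(a\log k + b)}$, bound each epoch's crossing probability via an exponential supermartingale and Ville's inequality, and sum using $\zeta(2a/c)$. Your caveat about the constants is well placed: the naive chord linearisation over $[c^{k-1},c^k)$ gives an exponent $\tfrac{2\sqrt{c}}{(\sqrt{c}+1)^2}\,\beta_k$ rather than $\tfrac{2}{c}\,\beta_k$, so the ``short calculation'' you allude to does require the more careful per-epoch tuning (or, equivalently, the specific stitching in Zhao et al.) to land on the stated $2a/c$ and $2b/c$. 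Everything else in your outline is correct and mechanical.
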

where $\zeta$ denotes the Riemannian zeta function.
The constants in Lemma~\ref{thm:lil_adaptive} are chosen such that the lemma holds for a target confidence.  To simplify the notation, we denote the the error probability by $\delta'$ and the right hand side of Lemma~\ref{thm:lil_adaptive} by $C\left(\sigma, \tau, \delta'\right)$ such that the following holds with probability $1-\delta'$ for any $\tau \in \mathbb{N}$:
\begin{align}\label{eq:lil}
\left \vert \frac{1}{\tau}\sum_{l=1}^\tau X^{(l)} - \mu \right \vert \leq C\left(\sigma, \tau, \delta'\right).
\end{align}

We consider a stochastic multi-armed bandit (MAB) problem characterized  by a set of $n$ arms, indexed by $i = 1, \ldots, n$. Each arm is associated with a fixed, unknown probability distribution with means $\{\mu_i\}_{i=1}^n$. We assume that the means are unique. Without loss of generality, assume that the arm indices are sorted as per the means, such that $\mu_1 > \mu_2 > \ldots > \mu_n$. 

We are interested in the pure exploration setting, also known as the best arm identification problem, where the goal of an agent is to identify the top-$k$ arms (with the highest means) with a target confidence $1-\delta$ while minimizing the total time spent on exploration. Exploration in our setting, however, is not the same across the pulls of a given arm.
In particular, we assume that each pull of an arm is associated with an unknown (stochastic) delay that contributes to the total exploration time. The presentation in this section assumes a \textit{sequential} MAB setting where the agent can pull/run only one arm at a given time step; the alternate \textit{parallel} MAB setting where an agent can control a ``batch'' of arms at once is discussed in Section~\ref{sec:parallel}~\citep{perchet2015batched,wu2015identifying,jun2016top}.

Formally, the stochastic data generating process with delayed feedback can be described as follows. At any given start time $t_s$: 
\begin{enumerate}
\item Agent chooses an arm $i$.
\item Nature samples a delay $D_s \geq 1$ from an (unknown) arm specific delay distribution.
\item Nature samples a sequence of partial feedback, $(Y_{i,t_s+1}, \ldots, Y_{i,t_s+D_s}) \mid D_s$ jointly. The joint distribution of the partial feedback depends on $\mu_i$.\\\\
In general, the delay and partial feedback sequence are unknown to the agent at time $t_s$. 
\end{enumerate}
At time $t_s+\Delta$ where $\Delta \in [1, D_s]$,
\begin{enumerate}
\setcounter{enumi}{3}
\item Nature reveals $Y_{i,t_s+\Delta}$ to the agent.\\
If $\Delta=D_s$, the agent goes to step 1. Otherwise, the agent decides whether to continue the current pull (step 4) or start another pull (step 1) in which case any remaining partial feedback for the current pull will not be observed.
\end{enumerate}

The agent and nature continue to play the above game until the agent has selected a set of candidate top-$k$ arms. 
The delay $D_s$ can contribute significantly to the total time spent on exploration. Under appropriate assumptions however, we can exploit the structure in the partial feedback to significantly reduce the overall exploration cost of delayed feedback. The data generating process described above is very general and one can make many natural assumptions on the distribution of the partial feedback $(Y_{i,t_s+1}, \cdots, Y_{i,t_s+D_s}) \mid D_s$. 

For instance, we can model the following scenarios:
\begin{itemize}
\item \textbf{Full delayed feedback}: The partial feedback at the last delay, $Y_{i,t_s+D_s}$ is sub-Gaussian with mean $\mu_i$ and scale parameter $\sigma_i$. For the intermediate time steps, $\Delta \in [1, D_s-1]$, we have $Y_{i,t_s+\Delta} =0 $, and hence, we receive no information about $\mu_i$ at these time steps.
\item \textbf{Incremental partial feedback}: The set of partial feedback $Y_{i,t_s+\Delta}$ for every time step $\Delta \in [1, D_s]$ consists of mutually independent, sub-Gaussian random variables with mean $\nicefrac{\mu_i}{D_s}$ and scale parameter $\nicefrac{\sigma_i}{\sqrt{D_s}}$.  
Hence, the cumulative partial feedback $\sum_{\Delta=1}^{D_s} Y_{i,t_s+\Delta}$ is also sub-Gaussian with mean $\mu_i$ and scale parameter $\sigma_i$.
\item \textbf{Unbiased noisy partial feedback:} The partial feedback at the last delay, $Y_{i,t_s+D_s}$ is sub-Gaussian with mean $\mu_i$ and scale parameter $\sigma_i$. For the intermediate time steps, $\Delta \in [1, D_s-1]$, the set of partial feedback 
$Y_{i,t_s+\Delta}\mid Y_{i,t_s+D_s} - Y_{i,t_s+D_s}$ consists of mutually independent, sub-Gaussian random variables with zero mean and scale parameter $\sigma^{(p)}_{i}$.
\item \textbf{Biased noisy partial feedback:} The partial feedback at the last delay, $Y_{i,t_s+D_s}$ is sub-Gaussian with mean $\mu_i$ and scale parameter $\sigma_i$. For the intermediated time steps, $\Delta \in [1, D_s-1]$, the set of partial feedback 
$Y_{i,t_s+\Delta}\mid Y_{i,t_s+D_s} - Y_{i,t_s+D_s}$ consists of mutually independent, sub-Gaussian random variables with mean $b_i$ and scale parameter $\sigma^{(p)}_{i}$. 
Here, $b_i$ is a fixed, but unknown bias associated with the partial feedback for the arm.
\end{itemize}
 
Note that the standard MAB setting where we observe the feedback at the immediate next time step is a special case of the full delayed feedback with a constant delay $D_s=1$ for every pull. In fact, the algorithms for best arm identification in the \textit{full delayed} and \textit{incremental partial feedback} settings can be derived naturally from the standard MAB algorithms with no delays. Specifically, the agent can simply chose to ignore the time instants at which delayed feedback is unavailable for the full delayed feedback setting. The sample complexity of any such algorithm is hence the number of arm pulls required in the standard MAB setting weighted by the delay of every pull.  
These settings are still interesting for parallel MAB where information can be shared across arms; we discuss this case in Section~\ref{sec:parallel}.

The \textit{partial feedback} settings, however, present an interesting scenario where the agent can extract information from noisy feedback.
For such settings, we propose modified algorithms based on racing-style procedures typically used for the standard MAB setting~\citep{maron1994hoeffding}. Typically, racing algorithms maintain three disjoint arm sets: accepted arms $A$, rejected arms $R$, and surviving arms $S$. Initially, all arms are assigned to the surviving set $S$. Racing procedures uniformly sample arms while removing them from the surviving set based on confidence bounds. For convenience, define the lower confidence bounds (LCB) 
and upper confidence bounds (UCB) for every arm $i$ as: 
\begin{align}
LCB_i := \widehat{\mu}_i - C_i\label{eq:lcb}\\
UCB_i := \widehat{\mu}_i + C_i \label{eq:ucb}
\end{align}
where $\widehat{\mu}_i$ is the empirical mean of the feedback for arm $i$ and the confidence bound $C_i$ will depend on the particular racing algorithm under consideration.  Let $k_t:=k-\vert A \vert$ be the effective number of top arms remaining to be identified at a time step $t$.
Each time we receive a feedback reward (full or partial), the racing procedures update these sets based on the rule that any arm in $S$ whose LCB is greater than the UCB of $\vert S \vert -k_t$ arms is accepted. Similarly, any arm in $S$ whose UCB is less than the LCB of $k_t$ arms is rejected. The racing procedure is repeated until $S$ is empty. The pseudocode for the subroutine that updates the arm sets is given in Algorithm~\ref{alg:racingsubroutines}.

\begin{algorithm}[t]  \caption{RacingSubroutines}
   \label{alg:racingsubroutines} 
\begin{algorithmic}
\Function{$\mathrm{UpdateArmSets}$}{arm sets $A$, $R$, $S$, top $k$, confidence bounds $\{LCB_i, UCB_i\}_{i \in S}$}
\State Initialize $k_t \leftarrow k - |A|$.
\State Update \resizebox{.8\hsize}{!}{$A \leftarrow A \cup \{i \in S \mid LCB_i > \max_{j \in S}^{(k_t + 1)} UCB_j\}$}.
\State Update \resizebox{.8\hsize}{!}{$R \leftarrow R \cup \{i \in S \mid UCB_i < \max_{j \in S}^{(k_t)} LCB_j\}$}.
\State Update $S \leftarrow S \backslash \{ R\cup A\}$.
\State \Return $A$, $R$, $S$.
\EndFunction
\State	\Function{$\mathrm{GetBatchArms}$}{surviving arms $S$, counts $\{N_i, a_i\}_{i \in S}$, effective batch size $e$, limit $r$}
\State Initialize new arm pulls $\mathbf{m}\leftarrow \mathbf{0} \in \mathbf{R}^n$.
\For {slot $s = \left\{1, \cdots, \min\left(e, \vert S \vert r\right)\right\}$}
\State Least pulled arm $j \leftarrow \text{arg } \min_{i \in S: a_i \leq r} N_i$
\State Update $a_j \leftarrow a_j + 1$.
\State Update $m_j \leftarrow m_j + 1$.
\State Update $N_j \leftarrow N_j + 1$.
\EndFor 
\State \Return $\mathbf{m}, \{N_i\}_{i \in S}, \{a_i\}_{i \in S}$
\EndFunction
\end{algorithmic}
\end{algorithm}

\section{SEQUENTIAL MAB}\label{sec:single}
In sequential MAB, we assume that the agent can receive (partial) feedback from only a single arm pull at any given time step, \textit{e.g.}, we can only perform one experiment at a time. We skip a separate discussion on the trivial full feedback (and the related incremental feedback) setting and discuss it only in the context of the \emph{noisy feedback settings}. For convenience, we denote the partial feedback at the last delay as $X_{i,t_s}=Y_{i,t_s+D_s}$. Here, $X_{i,t_s}$ is a sub-Gaussian random variable with mean $\mu_i$ and scale parameter $\sigma_i$. The proofs of all results in this section are given in the Appendix.

\subsection{Unbiased noisy partial feedback}
In this setting, an agent has access to unbiased partial feedback at the intermediate time steps before receiving the full delayed feedback. In the following result, we derive a variation of the finite LIL bound for the unbiased partial feedback setting.

\begin{proposition}\label{thm:lil_noisy}
Let $\{Y_{i,t_1+1}, Y_{i,t_1+2}, \ldots, Y_{i,t_1+D_1}$, $Y_{i,t_2+1}, \ldots, Y_{i,t_2+D_2}, \ldots\}$ denote the partial feedback sequences
for the pulls of an arm $i$ started at time steps $t_{1}, t_{2}, \ldots$ and delays $D_1, D_2, \ldots$. Then, under the distributional assumptions on the unbiased partial feedback (see Section~\ref{sec:background}) for any $F\in \mathbb{N}$, $P \in [1, D_F]$, $\delta_f > 0, \delta_p > 0$, we have with probability $1-\delta_f-\delta_p$:
\begin{align}\label{eq:lil_noisy_1}
&\left \vert \frac{1}{F}\left[\sum_{f=1}^{F-1} X_{i,t_f} +  \frac{1}{P}\sum_{l=1}^P Y_{i, t_{F}+l} \right] - \mu_i  \right \vert  \nonumber \\
&\leq C\left(\sigma_i, F, \nicefrac{\delta_f}{n}\right) + \frac{1}{F}C\left(\sigma^{(p)}_i, P, \nicefrac{\delta_p}{n}\right) \forall i \in [1, n]
\end{align}
\end{proposition}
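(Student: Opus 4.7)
The plan is to decompose the estimator on the left-hand side into the empirical mean of $F$ completed pulls plus a small $\mathcal{O}(1/F)$ correction coming from the ongoing $F$-th pull, then invoke Lemma~\ref{thm:lil_adaptive} on each piece with the appropriate scale parameter. Concretely, I would add and subtract $\frac{1}{F}X_{i,t_F}$ inside the brackets to rewrite
\[
\frac{1}{F}\!\left[\sum_{f=1}^{F-1} X_{i,t_f} + \frac{1}{P}\sum_{l=1}^{P} Y_{i,t_F+l}\right] - \mu_i = \left(\frac{1}{F}\sum_{f=1}^{F} X_{i,t_f} - \mu_i\right) + \frac{1}{F}\!\left(\frac{1}{P}\sum_{l=1}^{P} Y_{i,t_F+l} - X_{i,t_F}\right).
\]
The first summand is the usual sample-mean deviation for the sequence of completed feedbacks, while the second summand captures the error of using the partial-feedback average as a proxy for the (possibly unobserved) final feedback $X_{i,t_F}$ of the ongoing pull.

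For the first summand, the $X_{i,t_f}$'s are i.i.d.\ sub-Gaussian with mean $\mu_i$ and scale $\sigma_i$ by the modelling assumption, and $F$ is a random variable on $\mathbb{N}$, so Lemma~\ref{thm:lil_adaptive} applies with $\tau=F$ and confidence level $\delta_f/n$, giving
\[
\left|\frac{1}{F}\sum_{f=1}^{F} X_{i,t_f} - \mu_i\right| \leq C(\sigma_i, F, \delta_f/n)
\]
with probability at least $1-\delta_f/n$. For the second summand, the noise model in Section~\ref{sec:background} stipulates that, conditional on $X_{i,t_F} = Y_{i,t_F+D_F}$, the increments $Y_{i,t_F+l} - X_{i,t_F}$ for $l=1,\ldots,D_F-1$ are mutually independent, zero-mean, sub-Gaussian with scale $\sigma^{(p)}_i$ (and the $l=D_F$ term is trivially zero). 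Applying Lemma~\ref{thm:lil_adaptive} conditionally on $X_{i,t_F}$ with $\tau=P$ and confidence $\delta_p/n$ to this noise sequence yields
\[
\left|\frac{1}{P}\sum_{l=1}^{P} Y_{i,t_F+l} - X_{i,t_F}\right| \leq C(\sigma^{(p)}_i, P, \delta_p/n)
\]
with conditional probability at least $1-\delta_p/n$; since the bound is uniform in the conditioning value, it also holds unconditionally.

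I would then glue the two bounds with the triangle inequality applied to the decomposition above, giving the claimed inequality for a single arm $i$ at total failure probability at most $(\delta_f+\delta_p)/n$, and finally take a union bound over the $n$ arms to conclude the statement uniformly in $i\in[1,n]$ with probability at least $1-\delta_f-\delta_p$. The main obstacle I anticipate is the careful book-keeping for the random stopping indices $F$ and $P$ and the conditional independence step when invoking the LIL on the partial-feedback noise sequence; once the add-and-subtract decomposition is in hand, the remainder is two direct applications of Lemma~\ref{thm:lil_adaptive} combined via a union bound.
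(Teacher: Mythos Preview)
Your proposal is correct and follows essentially the same route as the paper: apply Lemma~\ref{thm:lil_adaptive} to the $F$ full feedbacks, apply it again conditionally to the partial-feedback noise $Y_{i,t_F+l}-X_{i,t_F}$, combine via the triangle inequality, and union bound over arms. The only cosmetic difference is that you spell out the add-and-subtract decomposition explicitly, whereas the paper leaves that algebra implicit in the step passing from the two individual LIL bounds to the combined inequality.
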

where $X_{i,t_f} = Y_{i,t_f+D_f}$ by definition. At any intermediate time step between the the start and end of the $F$-th arm pull, Proposition~\ref{thm:lil_noisy} adaptively ``splits" the confidence bounds pertaining to the full delayed feedback for $F$ steps (first term in the RHS) and the partial delayed feedback for the $F$-th arm pull (second term in the RHS). Contrast this with the full delayed feedback setting where the following confidence bound holds with probability $1-\delta$:
\begin{align}\label{eq:lil_full_delayed}
\left \vert \frac{1}{F-1}\sum_{f=1}^{F-1} X_{i,t_f} - \mu_i \right \vert \leq C\left(\sigma_i, F-1, \nicefrac{\delta}{n}\right) \forall i \in [1, n]
\end{align}
To obtain the same target confidence in the two cases above, we constrain $\delta = \delta_f + \delta_p$. Solving for the optimal $\delta_f^\ast, \delta_p^\ast$ that minimize the RHS of Eq.~\eqref{eq:lil_noisy_1} under the constraint due to $\delta$ corresponds to a convex optimization problem that can be solved in closed form.
Comparing the mean estimators in Eq.~\eqref{eq:lil_noisy_1} and Eq.~\eqref{eq:lil_full_delayed}, we note that the agent can only use the full delayed feedback up till the $(F-1)$-th arm pull while waiting for the outcome of the $F$-th arm pull in the latter case while the former dynamically incorporates the partial feedback observed for the $F$-th arm pull. 
\begin{algorithm}[t]
   \caption{RacingUnbiasedPF (arm parameters $\{i, \sigma_i, \sigma^{(p)}_i\}_{i=1}^n$, top $k$, confidence $\delta$)}
   \label{alg:seqracingnoisypf}
\begin{algorithmic}[1]
\State Initialize global time step $t=0$, surviving $S=\{i\}_{i=1}^n$, accepted $A=\{\}$, rejected $R=\{\}$.
\State Initialize per-arm full delayed feedback counter $F_i=0$, empirical means $\hat{\mu}_{i}=0$,  confidence bounds $LCB_i=-\infty$,  $UCB_i=\infty$ for all $i \in S$. 
\While{$S$ is not empty}
\While {$\mathrm{True}$}
\State Increment $t \leftarrow t+1$.
\State Collect partial feedback $Y_{a, t}$.
\State Update $\widehat{\mu}^{(p)} \leftarrow \frac{(P \widehat{\mu}^{(p)} + Y_{a, t})}{(P+ 1)}$. 
\State Increment $P \leftarrow P + 1$. 
\State Set \resizebox{.72\hsize}{!}{$C^{(partial)} \leftarrow C(\sigma_a, F_a + 1, \nicefrac{\delta_f^\ast}{n}) + \frac{C(\sigma^{(p)}_a, p, \nicefrac{\delta_p^\ast}{n})}{F_a + 1}$.}
\State\label{line:partial_start} Choose \resizebox{.65\hsize}{!}{$\mathrm{FOrP} \leftarrow \mathrm{arg}\min \left(C(\sigma_a, F_a, \nicefrac{\delta}{n}), C^{(partial)}\right)$.}
\State Update \resizebox{.65\hsize}{!}{$C_a \leftarrow C(\sigma_a, F_a, \nicefrac{\delta}{n})$ if $\mathrm{FOrP}=F$ else $C^{(partial)}$}.
\State Update \resizebox{.65\hsize}{!}{$\widehat{\mu}_a \leftarrow  \widehat{\mu}^{(f)}$ if $\mathrm{FOrP}=F$ else $\frac{F_a\widehat{\mu}^{(f)} + \widehat{\mu}^{(p)}}{F_a + 1}$}.
\State\label{line:partial_end} Update $LCB_a, UCB_a$.
\State\label{line:racing_elimination} \resizebox{.8\hsize}{!}{$A, R, S \leftarrow \mathrm{UpdateArmSets}(A, R, S, k, \{LCB_i, UCB_i)\}_{i \in S})$}.
\If {$P=D_{a, t_a}$ or $a \not \in S$}\label{line:end_delay} 
\State Break \Comment{Pull on  termination/elimination}
\EndIf
\EndWhile 
\State\label{line:get_new_arm} Pull arm $a$ where $a \leftarrow \mathrm{arg} \min_{a \in S} F_a$.
\State Initialize start $t_a\leftarrow t$, partial feedback counter $P = 0$, partial mean $\widehat{\mu}^{(p)} = 0$, full mean $\widehat{\mu}^{(f)} \leftarrow \widehat{\mu}_i$.
\EndWhile
\State \Return $A$
\end{algorithmic}
\end{algorithm}

\begin{algorithm}[t]
   \caption{BatchRacingFullDF(arm parameters $\{i, \sigma_i\}_{i=1}^n$, top $k$, confidence $\delta$, batch $b$, limit $r$)}
   \label{alg:batchracingfulldf}
\begin{algorithmic}[1]
\State Initialize global time step $t=0$, pull status counts $\mathrm{running}=0$, surviving arms $S=\{i\}_{i=1}^n$, accepted arms $A=\{\}$, rejected arms $R=\{\}$.
\State Initialize per-arm global pull counts $N_i=0$, running pull counts $a_i=0$, full delayed feedback $F_i=0$, empirical means $\hat{\mu}_{i}=0$, confidence bounds $LCB_i=-\infty$, $UCB_i=\infty$ for all $i \in S$. 
\While{$S$ is not empty}
\If {$\mathrm{running}>0$}
\State Increment $t \leftarrow t+1$.
\State Collect batch full delayed feedback $Y$.
\ForAll {$Y_{h, t} \in Y$}
\State Update $\widehat{\mu}_h \leftarrow \nicefrac{(F_h \widehat{\mu}^{(f)} + Y_{h, t})}{(F_h + 1)}$. 
\State Increment $F_h \leftarrow F_h+1$. 
\State Update $LCB_h, UCB_h$.
\State Decrement $a_h \leftarrow a_h-1$.
\EndFor
\If {$Y$ is not empty}\label{line:updracingstart}
\State \resizebox{.75\hsize}{!}{$A, R, S \leftarrow \mathrm{UpdateArmSets}(A, R, S, k, \{LCB_i, UCB_i\}_{i \in S})$}.
\State Decrement $\mathrm{running} \leftarrow \mathrm{running} - \vert Y\vert$.
\label{line:updracingend}\EndIf
\EndIf
\State\label{line:constraintstart} Update arms $\mathbf{m}$, counts $\{N_i, a_i\}_{i \in S} \leftarrow \mathrm{GetBatchArms}(S, \{N_i, a_i\}_{i \in S}, b-\mathrm{running}, r)$.
\State\label{line:constraintend} Pull every arm $j \in \mathbf{m}\; m_j$ times.
\State Update $\mathrm{running}\leftarrow \mathrm{running} + \sum_{j \in \mathbf{m}} m_j$.
\EndWhile
\State \Return $A$
\end{algorithmic}
\end{algorithm}

Based on the above analysis, we propose a racing algorithm for the unbiased partial feedback setting with the psuedocode given in Algorithm~\ref{alg:seqracingnoisypf}. 
At any intermediate time step, the agent chooses a mean estimator and a confidence bound for the current arm (Lines~\ref{line:partial_start}-\ref{line:partial_end}). The choice corresponds to the tighter confidence bound obtained either by optimizing Eq.~\eqref{eq:lil_noisy_1} over $\delta_p, \delta_f$ or the one obtained by Eq.~\eqref{eq:lil_full_delayed} where only the full delayed feedback are considered. Thereafter, the agent invokes the racing subroutine that checks whether a surviving arm can be rejected or accepted (Line~\ref{line:racing_elimination}). If the pull has finished running or the current arm is itself eliminated (Line~\ref{line:end_delay}), the agent pulls a new arm in the next time step which has the least number of full delayed feedback (Line~\ref{line:get_new_arm}). 

We can make some observations about Algorithm~\ref{alg:seqracingnoisypf}. First, we see that an agent adopting the proposed algorithm can never do worse than the alternate racing strategy that considers estimates only based on the full delayed feedback. This is because even at the intermediate time steps, the agent considers the mean estimator corresponding to the smaller of the two confidence bounds, which can only reduce the delayed sample complexity of the algorithm. Whenever an arm pull has finished, the agent also updates the mean and confidence interval by an arithmetic averaging over $only$ the full delayed feedback. Using partial feedback is impractical at such time steps since the partial feedback only introduce noise and do not provide any additional information about the true mean.

If the maximum possible delay associated with any arm pull is given by $D_{\max}$, then we can trivially extend bounds for the sample complexity of racing style procedures~\citep{jamieson2014best} to derive similar bounds on the \textit{delayed sample complexity} with an extra multiplicative factor of $D_{\max}$.\footnote{The delayed sample complexity for an algorithm refers to the total number of time steps (including delays) before termination.} This is similar to what one would expect from the full delayed feedback setting and is not surprising for Algorithm~\ref{alg:seqracingnoisypf} 
since in the absence of any additional assumptions, the partial feedback could be completely uninformative and the algorithm will choose to ignore them. We believe 
domain-specific assumptions about the delay distribution and the noise associated with the partial feedback as a function of time could lead to a tighter analysis and is an interesting direction of future work. The correctness of Algorithm~\ref{alg:seqracingnoisypf} can be summarized below.
\begin{theorem}\label{thm:seqracingnoisypf_sample}
Assuming the delay associated with any arm pull is bounded, then 
Algorithm~\ref{alg:seqracingnoisypf} outputs the top-$k$ arms with probability at least $1-\delta$.
\end{theorem}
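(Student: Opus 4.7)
The plan is to reduce correctness of Algorithm~\ref{alg:seqracingnoisypf} to the validity of its confidence intervals and then apply a standard racing argument. Let $T \subseteq \{1,\ldots,n\}$ denote the (unique) set of true top-$k$ arms, and define the \emph{good event} $\mathcal{E}$ as the event that the interval $[LCB_i, UCB_i]$ stored by the algorithm contains $\mu_i$ for every arm $i$ at every update performed before termination. The bounded-delay assumption ensures only finitely many updates per pull, so $\mathcal{E}$ is determined by countably many deviation events.

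Conditional on $\mathcal{E}$, I would prove by induction over the calls to $\mathrm{UpdateArmSets}$ that $A \subseteq T$ and $R \cap T = \emptyset$; consequently $T \setminus A \subseteq S$ with $|T \setminus A| = k_t$ at every step. If an arm $i \in S$ is accepted because $LCB_i > \max_{j \in S}^{(k_t+1)} UCB_j$, then on $\mathcal{E}$ the mean $\mu_i$ strictly exceeds $\mu_j$ for at least $|S|-k_t$ surviving arms $j$; combined with uniqueness of the means and the invariant, this forces $i \in T \setminus A$. A symmetric argument shows that any rejected arm lies outside $T$, so at termination ($S = \emptyset$) we have $A = T$.

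To bound $P(\mathcal{E})$, I would apply the full-delayed bound~\eqref{eq:lil_full_delayed} and Proposition~\ref{thm:lil_noisy} to each arm separately. Each is a uniform-in-time statement (over $F$, resp.\ over $(F,P)$) with per-arm failure probability at most $\delta/n$, where the split bound uses $\delta_f^\ast + \delta_p^\ast = \delta$. Since line~\ref{line:partial_start} always selects the tighter of the two bounds, the confidence radius actually assigned to arm $i$ is valid whenever both of the associated uniform events occur for that arm; a union bound over the two bound families and over the $n$ arms (absorbing the resulting constant factor into the standard rescaling of the per-bound confidence) yields $P(\mathcal{E}) \geq 1-\delta$, which together with the previous paragraph completes the proof.

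The main obstacle I foresee is handling the adaptive, per-step $\min$ of two bounds cleanly: a single-step application of Lemma~\ref{thm:lil_adaptive} does not suffice, because the bound chosen at time $t$ may differ from the one chosen at $t+1$, so we must insist that both LIL-style events hold simultaneously along the entire trajectory of each arm. This is precisely why the uniform-in-$\tau$ form of Lemma~\ref{thm:lil_adaptive} is essential; the bounded-delay hypothesis enters only to guarantee almost-sure termination in finitely many wall-clock steps.
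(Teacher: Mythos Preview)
Your proposal is correct and follows essentially the same approach as the paper: define a good event on which all confidence intervals trap the true means, show via a standard racing argument (the paper packages this as a separate contradiction-based lemma, you do a direct induction, but the content is identical) that on this event every accept/reject decision is correct, and then use Eq.~\eqref{eq:lil_full_delayed} and Proposition~\ref{thm:lil_noisy} together with a union bound over arms to control $P(\mathcal{E})$. Your explicit discussion of the per-step $\min$ between the two bound families is in fact more careful than the paper's own treatment, which simply asserts that both uniform events hold with the stated probability ``regardless of the set of $\{\widehat{\mu}_i\}$ and $\{C_i\}$ picked by the algorithm.''
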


To get further intuition about the working of Algorithm~\ref{alg:seqracingnoisypf}, consider the situation where all arms have been pulled once except one. When the last remaining arm is pulled for the first time, the full delayed feedback setting will necessarily have to wait for the pull to finish running before eliminating the arms whereas Algorithm~\ref{alg:seqracingnoisypf} can potentially start eliminating arms right after the first partial delayed feedback is received. 

\begin{figure*}[t]
\centering
\begin{subfigure}[b]{0.31\textwidth}
\centering
\includegraphics[width=\textwidth]{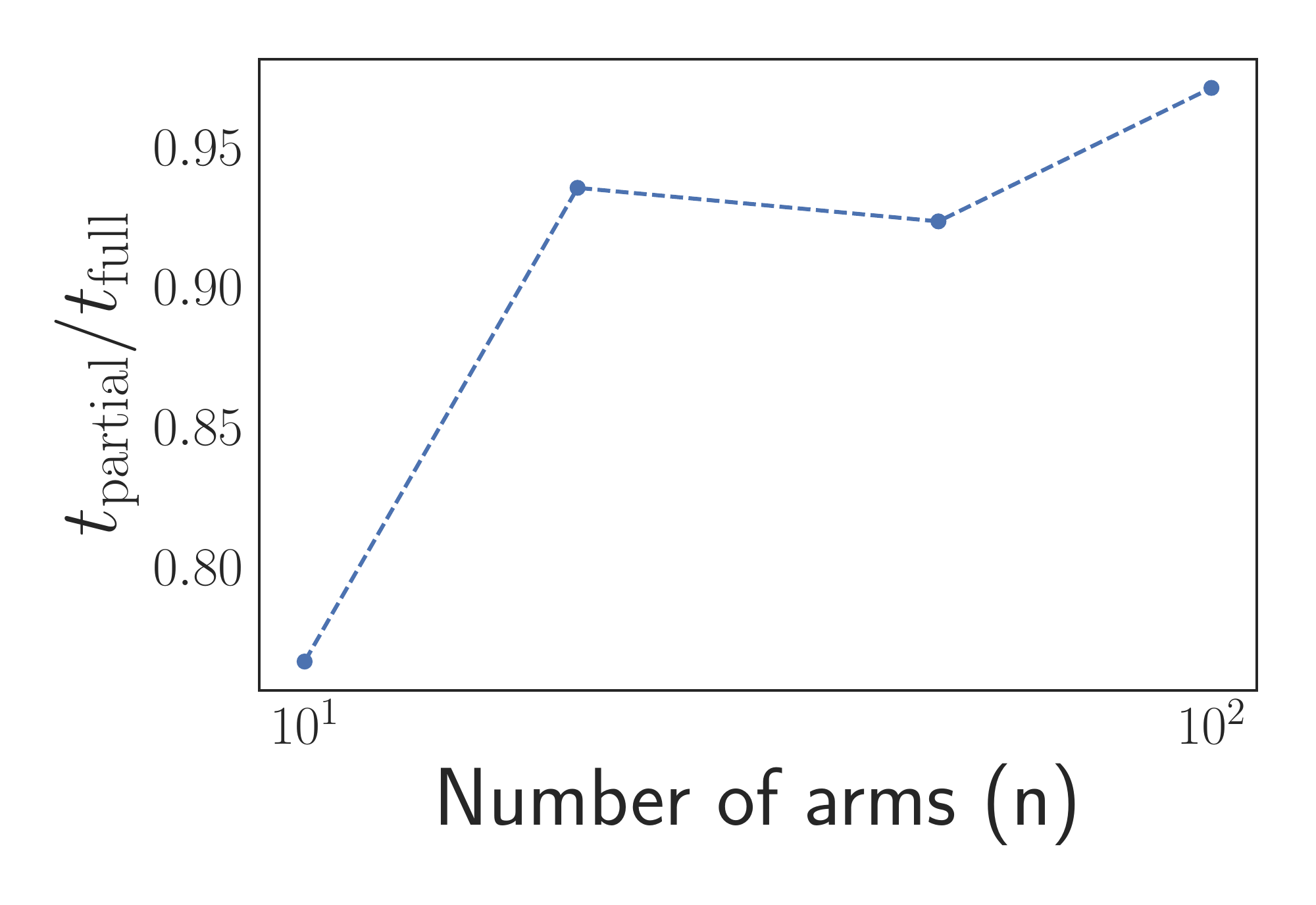}
\end{subfigure}
~
\begin{subfigure}[b]{0.295\textwidth}
\centering
\includegraphics[width=\textwidth]{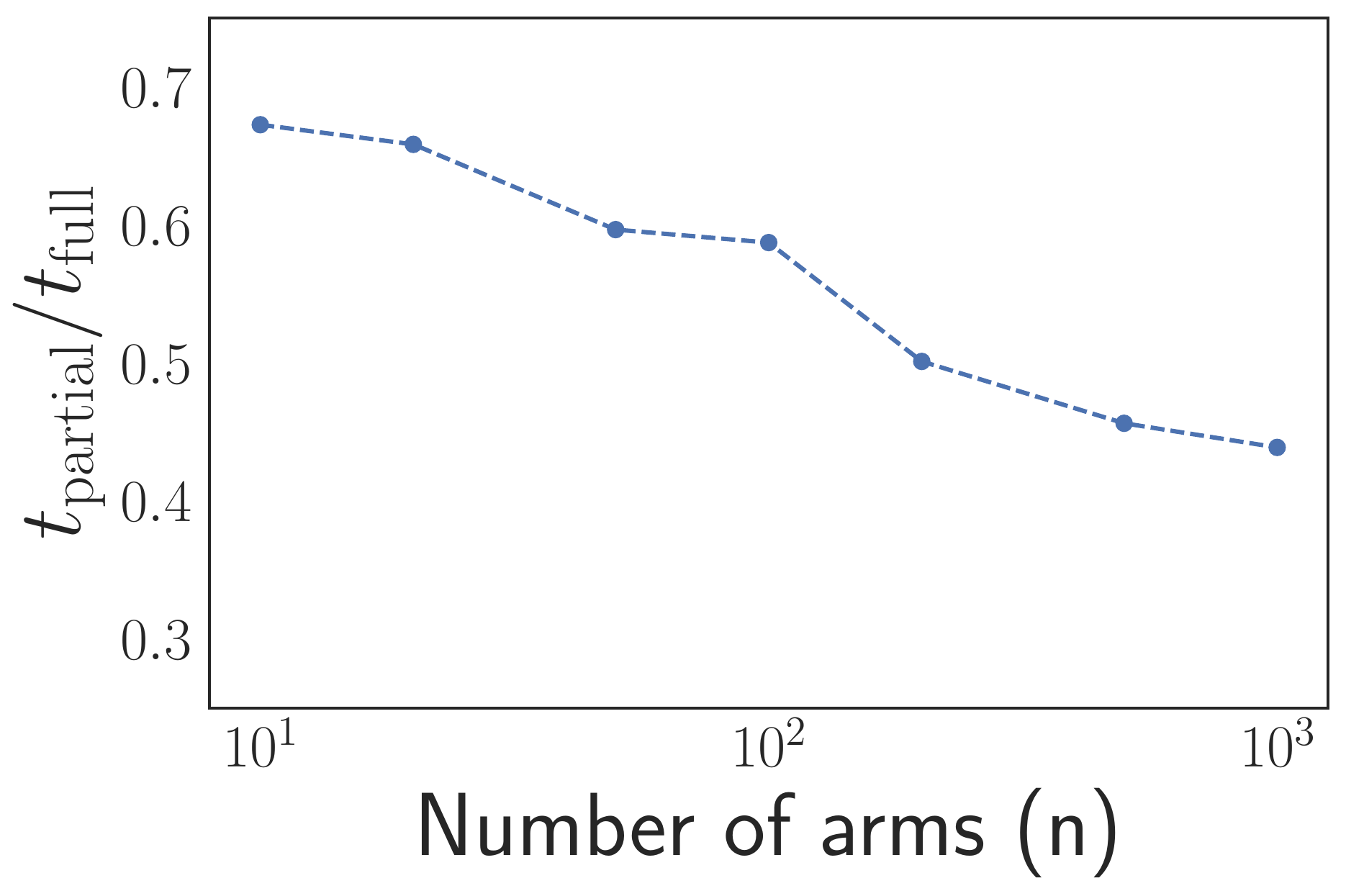}
\end{subfigure}
~
\begin{subfigure}[b]{0.315\textwidth}
\centering
\includegraphics[width=\textwidth]{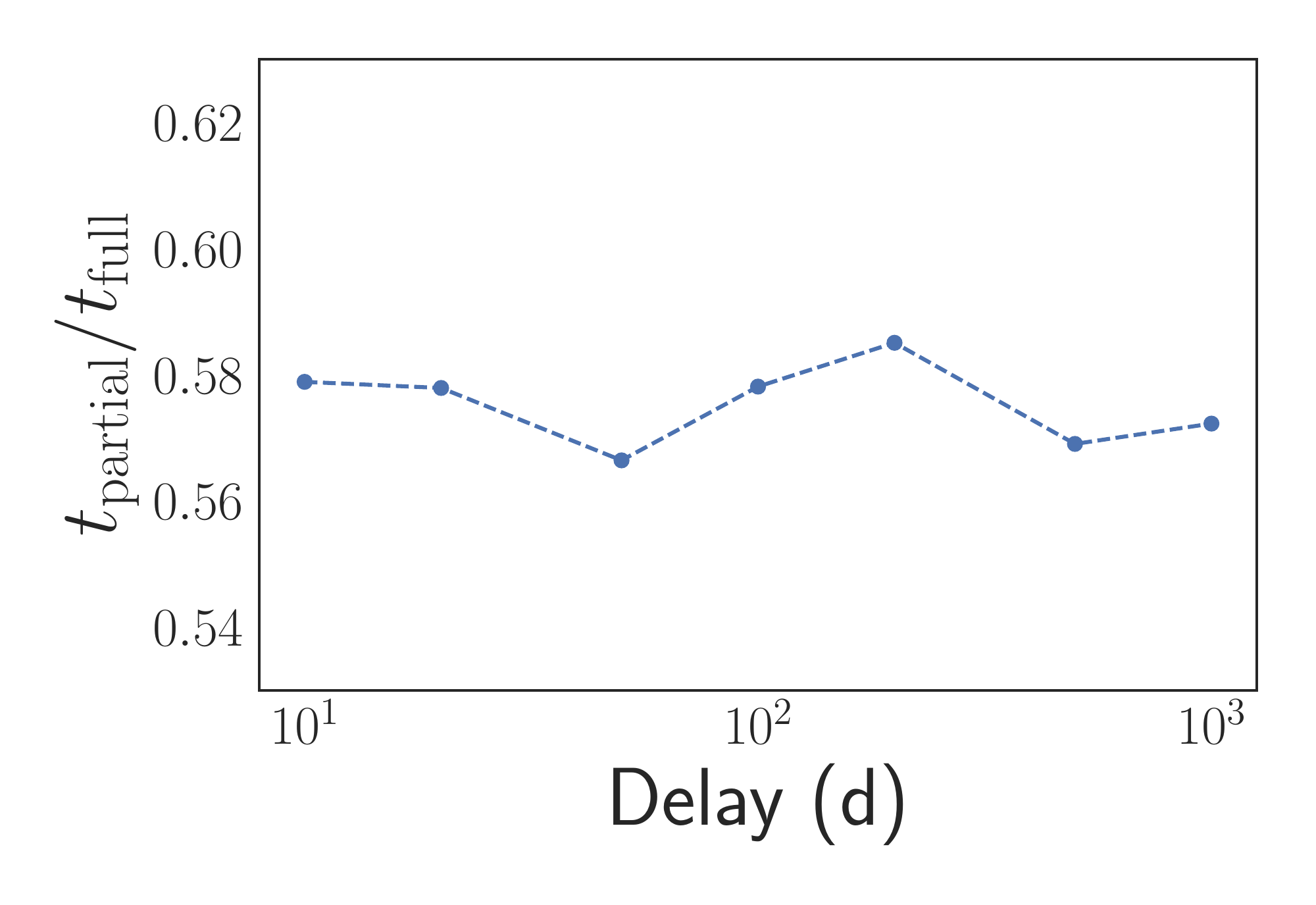}
\end{subfigure}

\begin{subfigure}[b]{0.31\textwidth}
\centering
\includegraphics[width=\textwidth]{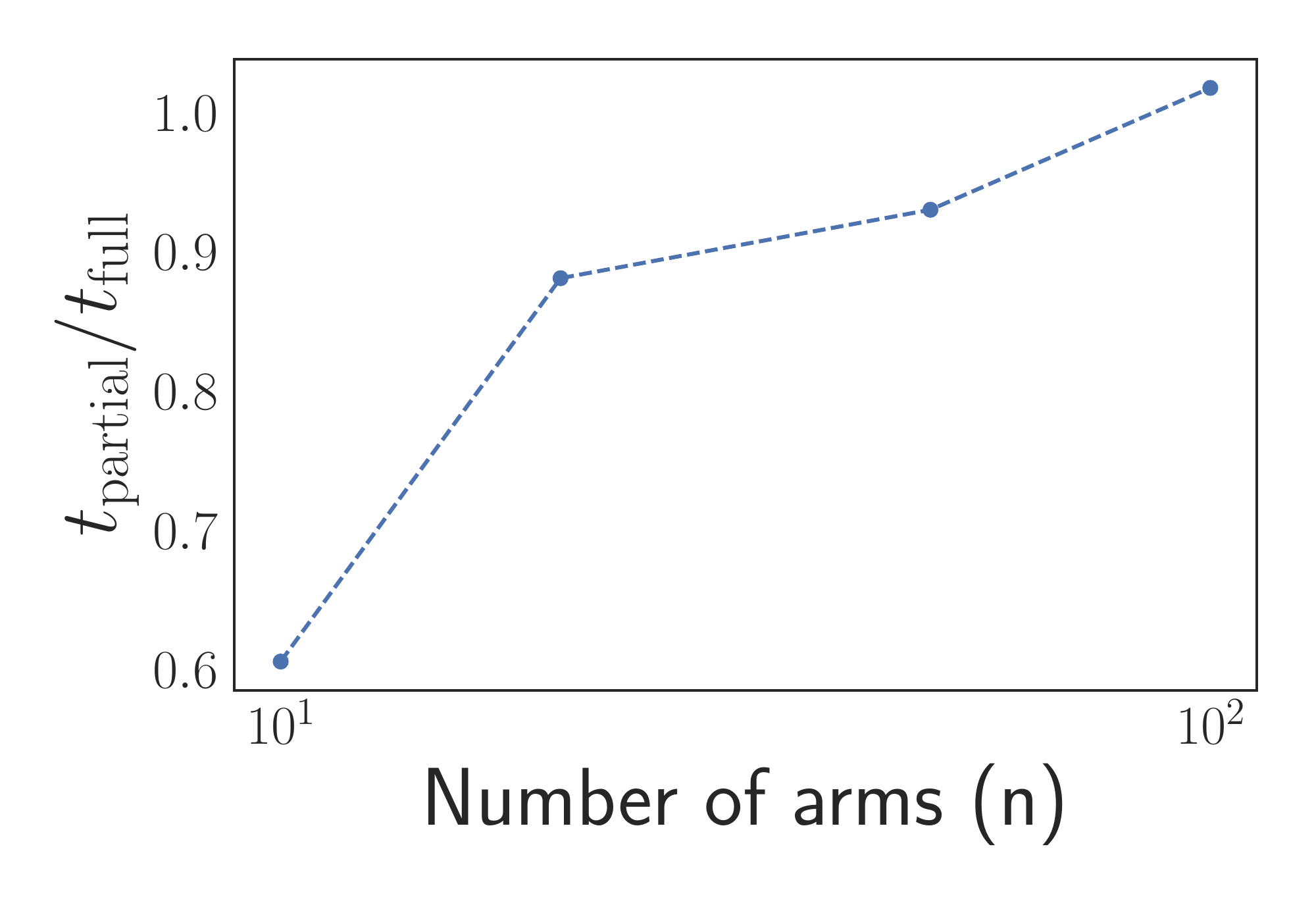}
\caption{Number of arms - Bounded means}\label{fig:exp_bounded_means}
\end{subfigure}
~
\begin{subfigure}[b]{0.295\textwidth}
\centering
\includegraphics[width=\textwidth]{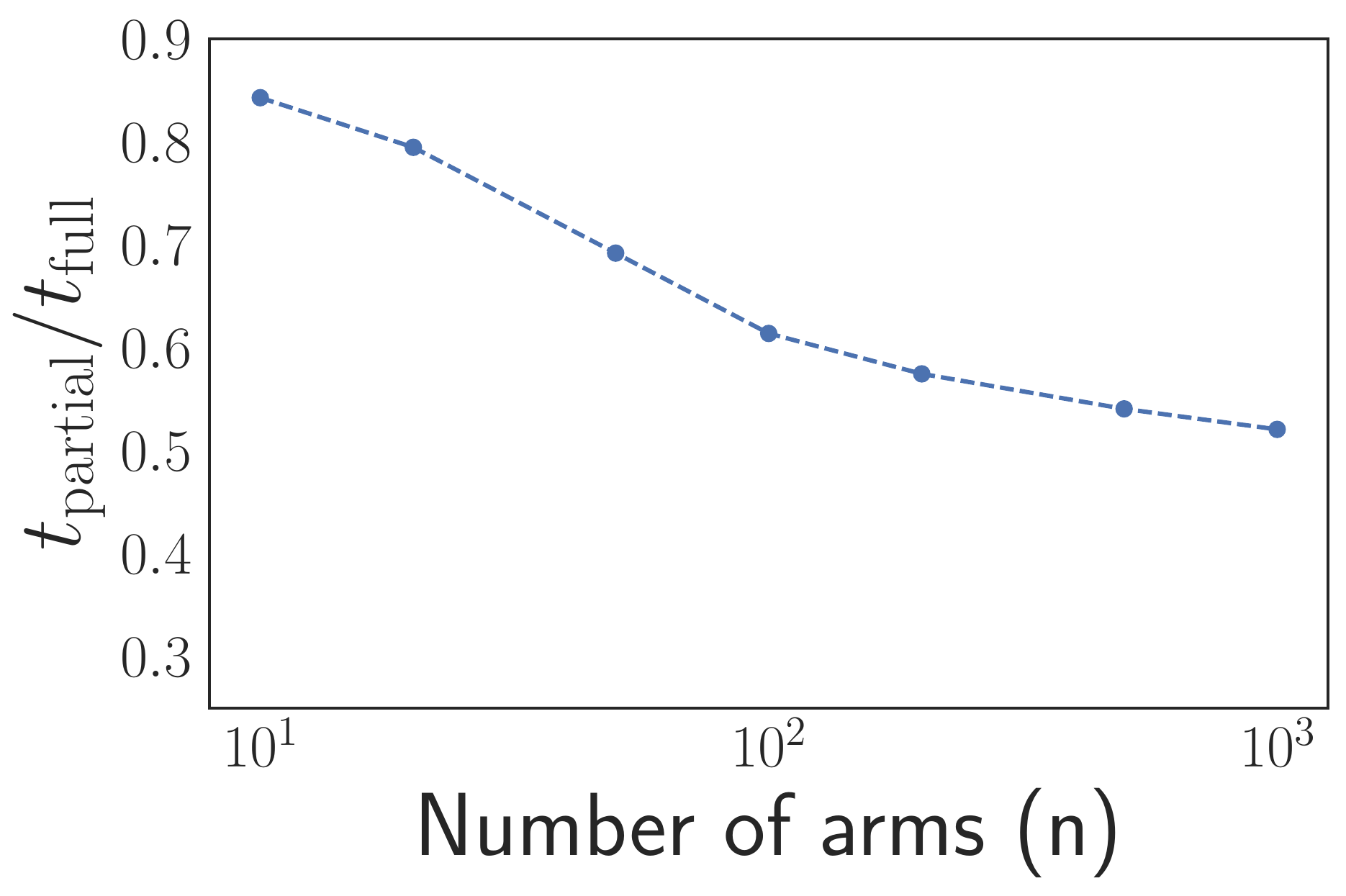}
\caption{Number of arms - Free means}\label{fig:exp_free_means}
\end{subfigure}
~
\begin{subfigure}[b]{0.315\textwidth}
\centering
\includegraphics[width=\textwidth]{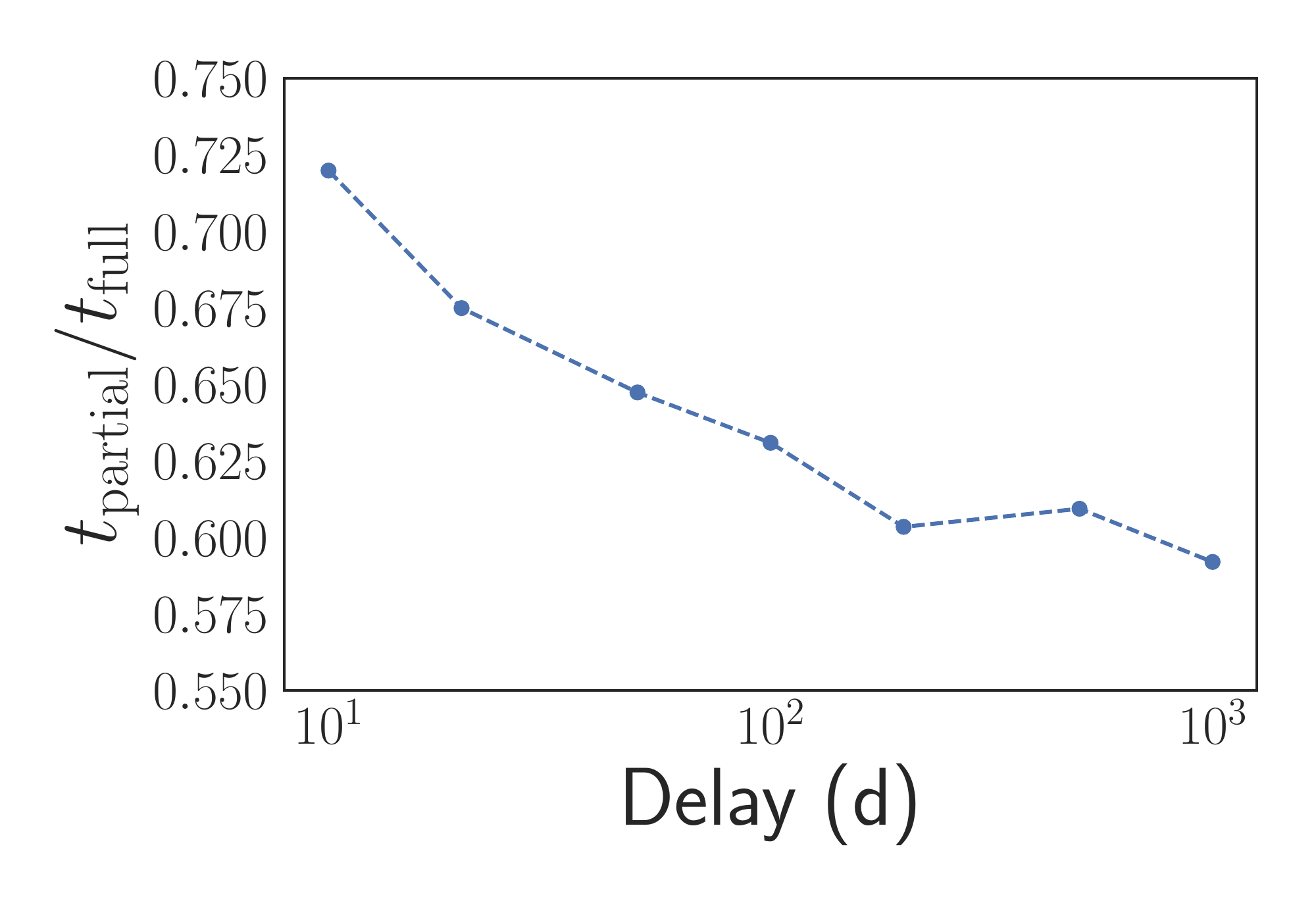}
\caption{Delay}\label{fig:exp_delay}
\end{subfigure}

\caption{Synthetic experiments evaluating performance. \textbf{Top:} sequential. \textbf{Bottom:} parallel. Lower is better.}\label{fig:exp}
\end{figure*}
\subsection{Biased noisy partial feedback}

The partial feedback at the intermediate time steps before a  full delayed feedback can also correspond to biased estimates of the full delayed feedback. Although the bias for the arms is unknown, it can be estimated empirically based on differences in the full delayed feedback and the partial feedback at the corresponding intermediate time steps. Formally, we assume the bias for a particular arm is an unknown constant $b_i$ and derive the following LIL bounds.
\begin{proposition}
\label{thm:lil_noisy_biased}
Let $\{Y_{i,t_1+1}, Y_{i,t_1+2}, \ldots, Y_{i,t_1+D_1}$, $Y_{i,t_2+1}, \ldots, Y_{i,t_2+D_2} \ldots\}$ denote the partial feedback sequences
for the pulls of an arm $i$ started at time steps $t_{1}, t_{2}, \ldots$ and delays $D_1, D_2, \ldots$ with bias $b_i$. Then, under the distributional assumptions on the partial feedback (see Section~\ref{sec:background}) for any $F\in \mathbb{N}\backslash\{1\}$, $P \in [1, D_F]$, $\delta_f > 0, \delta_p> 0, \delta_b> 0$, we have with probability $1-\delta_f-\delta_p-\delta_b$:
\begin{align}\label{eq:lil_noisy_biased}
&\left \vert \frac{1}{F}\left[ \sum_{f=1}^{F-1} X_{i, t_f} + \frac{1}{P} \sum_{p=1}^P \left(Y_{i, t_F+p} - Z_{i,F} \right) \right]- \mu_i \right \vert \nonumber \\
&\text{\resizebox{\hsize}{!}{$\leq C\left(\sigma_i, F, \nicefrac{\delta_f}{n}\right) + \frac{1}{F} \left [C\left(\sigma_i^{(p)}, P, \nicefrac{\delta_p}{n}\right) 
+ C\left(\sigma_i^{(p)}, F-1, \nicefrac{\delta_b}{n}\right)\right ]$}}\\
&\forall i \in [1,n] \text{where  \resizebox{.7\hsize}{!}{$Z_{i,F}=\frac{1}{F-1}\sum_{f=1}^{F-1}\left(\frac{\sum_{p=1}^{D_f-1} Y_{i, t_f+p}}{D_f-1} -  X_{i, D_f-1}\right)$}}. \nonumber
\end{align}
\end{proposition}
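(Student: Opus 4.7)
My plan is to mirror the decomposition used in the proof of Proposition~\ref{thm:lil_noisy}, with an extra error contribution that accounts for estimating the unknown bias $b_i$. Writing the intermediate partial feedback as
\begin{align*}
Y_{i,t_f + p} = X_{i,t_f} + b_i + \eta_{f,p} \quad \text{for } p \in [1, D_f - 1],
\end{align*}
where, conditional on $X_{i,t_f}$, the residuals $\eta_{f,p}$ are mutually independent zero-mean sub-Gaussian random variables with scale $\sigma_i^{(p)}$, a direct substitution gives
\begin{align*}
Z_{i,F} - b_i = \frac{1}{F-1}\sum_{f=1}^{F-1} \frac{1}{D_f - 1}\sum_{p=1}^{D_f - 1}\eta_{f,p},
\end{align*}
so the estimator inside the absolute value on the left-hand side of Eq.~\eqref{eq:lil_noisy_biased} can be rewritten as
\begin{align*}
\widehat{\mu} - \mu_i = \biggl(\frac{1}{F}\sum_{f=1}^{F} X_{i,t_f} - \mu_i\biggr) + \frac{\bar\eta_F}{F} + \frac{b_i - Z_{i,F}}{F},
\end{align*}
where $\bar\eta_F = \frac{1}{P}\sum_{p=1}^{P}\eta_{F,p}$.

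I would then apply Lemma~\ref{thm:lil_adaptive} separately to each of the three terms above. The first term is bounded by $C(\sigma_i, F, \nicefrac{\delta_f}{n})$ with probability at least $1 - \nicefrac{\delta_f}{n}$, since the $X_{i,t_f}$'s are i.i.d.\ sub-Gaussian with mean $\mu_i$ and scale $\sigma_i$. The second term is $1/F$ times the empirical mean of $P$ zero-mean sub-Gaussian residuals with scale $\sigma_i^{(p)}$, so Lemma~\ref{thm:lil_adaptive} bounds it by $C(\sigma_i^{(p)}, P, \nicefrac{\delta_p}{n})/F$ with probability at least $1 - \nicefrac{\delta_p}{n}$. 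The third term is $1/F$ times an average of $F-1$ zero-mean sub-Gaussian summands $\zeta_f := \frac{1}{D_f - 1}\sum_{p}\eta_{f,p}$, each with scale at most $\sigma_i^{(p)}/\sqrt{D_f - 1} \leq \sigma_i^{(p)}$; invoking Lemma~\ref{thm:lil_adaptive} with the uniform scale $\sigma_i^{(p)}$ yields a bound of $C(\sigma_i^{(p)}, F-1, \nicefrac{\delta_b}{n})/F$ with probability at least $1 - \nicefrac{\delta_b}{n}$. Combining these three tail bounds via the triangle inequality and then union-bounding across the three failure events and across all $n$ arms delivers the claim with cumulative failure probability at most $\delta_f + \delta_p + \delta_b$.

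The main technical obstacle will be the third term, because the $\zeta_f$'s are independent but not strictly identically distributed: their individual scales depend on the random delays $D_f$. Lemma~\ref{thm:lil_adaptive} as stated assumes i.i.d.\ sub-Gaussian summands, so the argument relies on the fact that replacing each $\zeta_f$'s scale by the uniform upper bound $\sigma_i^{(p)}$ preserves the sub-Gaussian moment-generating-function inequalities underlying the finite LIL derivation. A secondary subtlety is the edge case $P = D_F$, where the final summand of $\frac{1}{P}\sum_p Y_{i,t_F+p}$ is the unbiased full feedback $X_{i,t_F}$ rather than a biased partial one; this introduces a minor $O(b_i/D_F)$ residual that can be absorbed into the stated bound without changing the rate. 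Finally, I would carefully distinguish conditional from marginal independence of the $\eta_{f,p}$'s across pulls and intermediate times to justify that the three bounded quantities arise from separable LIL applications prior to the union bound.
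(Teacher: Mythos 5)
Your decomposition and argument are essentially identical to the paper's proof: it also splits the error into the full-feedback term bounded by $C(\sigma_i, F, \nicefrac{\delta_f}{n})$, the current-pull residual term bounded by $\frac{1}{F}C(\sigma_i^{(p)}, P, \nicefrac{\delta_p}{n})$, and the bias-estimation term $\frac{1}{F}\vert Z_{i,F}-b_i\vert$ bounded by $\frac{1}{F}C(\sigma_i^{(p)}, F-1, \nicefrac{\delta_b}{n})$, then union-bounds the three events and all $n$ arms. If anything, you are more careful than the paper, which silently treats the per-pull bias averages as i.i.d.\ with scale $\sigma_i^{(p)}$ (ignoring the delay-dependent scales $\sigma_i^{(p)}/\sqrt{D_f-1}$) and does not mention the $P=D_F$ edge case you flag.
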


Comparing Eq.~\eqref{eq:lil_noisy_biased} with Eq.~\eqref{eq:lil_full_delayed} by constraining $\delta=\delta_f+\delta_p+\delta_b$, we see that the mean estimator takes into account the partial feedback as before but also has a bias correction term. The bias correction term is an empirical average of the biases observed from the past full delayed feedback. This correction has the effect of introducing additional uncertainty (third term in the RHS) and we need at least one full feedback to estimate the bias before we can use the above bound. The corresponding racing algorithm runs similar to Algorithm~\ref{alg:seqracingnoisypf} with the key difference being that the mean estimator corresponds to the minimum of the confidence bounds in Eq.~\eqref{eq:lil_full_delayed}  and Eq.~\eqref{eq:lil_noisy_biased}, where the RHS of Eq.~\eqref{eq:lil_noisy_biased} is specified for the optimal $\delta_f^\ast, \delta_p^\ast, \delta_b^\ast$ minimizing the expression under the constraint due to $\delta$. We defer the pseudocode for this setting to the Appendix (see Algorithm~\ref{alg:biased_seqracingnoisypf}).

\section{PARALLEL MAB}\label{sec:parallel}
In parallel MAB, an agent has the additional ability to ``accumulate'' bulk information by controlling a batch of arm pulls. We extend the $(b, r)$ setting proposed in \cite{jun2016top} where the agent is allowed to run at most $b$ arm pulls in parallel at any given time step with an upper limit $r$ on the number of pulls of each arm. 

Even the full delayed feedback setting becomes interesting, as the agent can exploit information from arm pulls which have finished running in parallel to accept/reject delayed arm pulls that are still running thereby avoiding the pitfalls of long delays. The pseudocode for the proposed batch racing algorithm with full delayed feedback is given in Algorithm~\ref{alg:batchracingfulldf}. At every time step, an agent pulls a batch of arms with the least pull count $N_i$ that obeys the $(b, r)$ constraints
(Lines~\ref{line:constraintstart}-\ref{line:constraintend}). Whenever we obtain at least one full delayed feedback, we can update our arm sets as per the racing criteria (Lines~\ref{line:updracingstart}-\ref{line:updracingend}). 

The algorithms for the noisy partial feedback settings discussed in Section~\ref{sec:single} can be extended for parallel MAB in a similar manner and are skipped here to keep the presentation clean. 
The theoretical analysis of the batch MAB setting in \cite{jun2016top} builds on the analysis of standard MAB in ways independent of the choice of LIL bounds and hence, a merged analysis for delayed batch MAB using the LIL bounds for delayed feedback (as in Propositions~\ref{thm:lil_noisy} and~\ref{thm:lil_noisy_biased}) suggests a reduction factor of $b$ in the corresponding upper bounds.

\section{EXPERIMENTS}
We empirically validated the proposed algorithms on a simulated setting and two real world datasets. All experiments use an error probability of $\delta = 0.05$ and we observed that in each case, the algorithm obtains the desired confidence level empirically. For the parallel MAB setting, we set $b=r=10$.
\begin{figure}[t]
\centering
\begin{subfigure}[b]{0.8\columnwidth}
\includegraphics[width=\textwidth]
{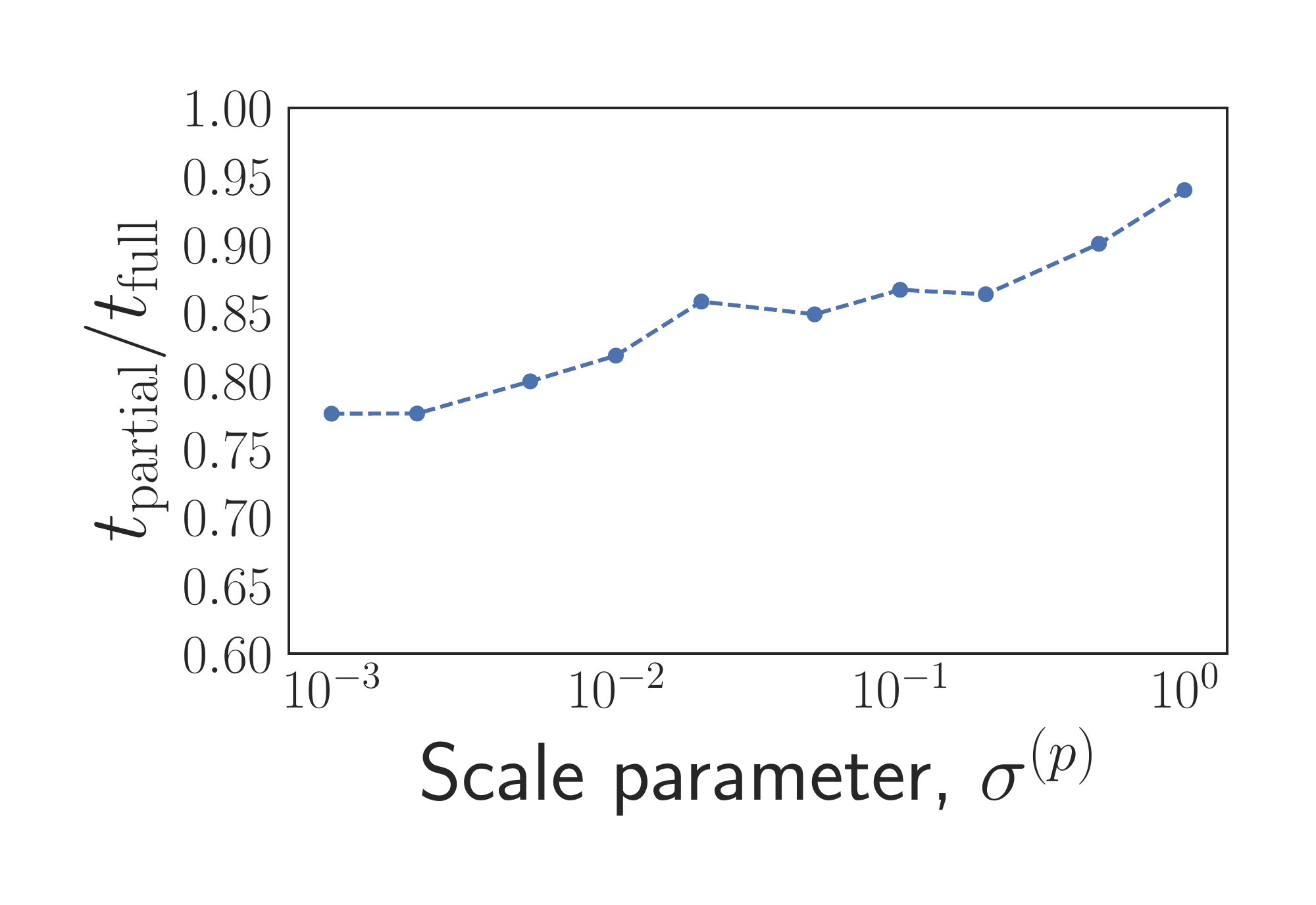}
\caption{Sequential}
\end{subfigure}
\begin{subfigure}[b]{0.8\columnwidth}
\includegraphics[width=\textwidth]
{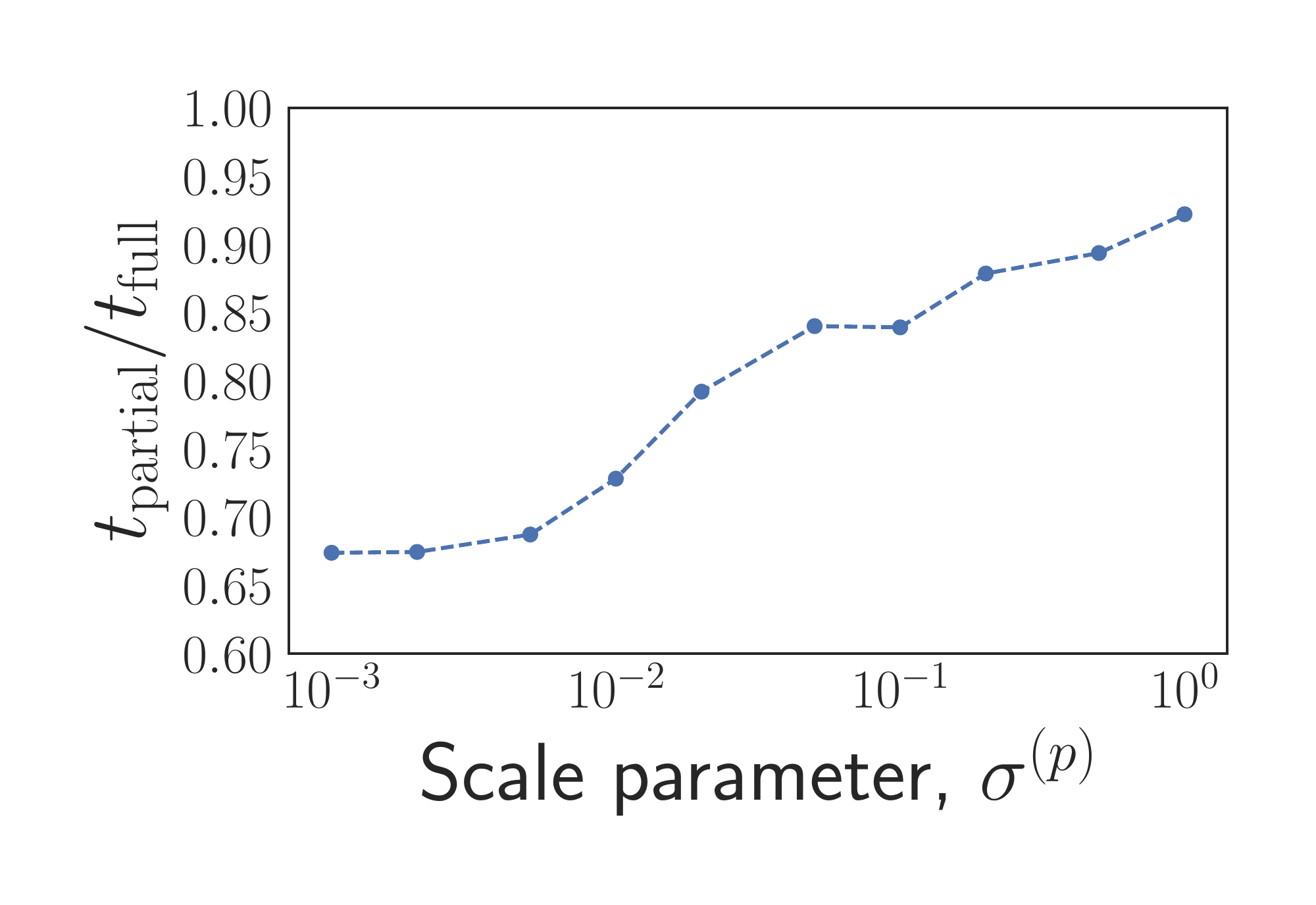}
\caption{Parallel}
\end{subfigure}
\caption{Experiments on battery charging.}\label{fig:battery}
\end{figure}

\subsection{Simulated data}
We performed an ablation study of the proposed algorithms for sequential and parallel MAB under different settings of delayed feedback. All experiments were repeated for $100$ random runs such that the standard errors are vanishingly small and the number of top arms to be identified, $k$ is set to $0.2n$. We quantify improvement as the ratio (=$\nicefrac{t_{\mathrm{partial}}}{t_{\mathrm{full}}}$) of the time taken by Algorithm~\ref{alg:seqracingnoisypf} or its parallel MAB extension (\textit{i.e.}, $t_{\mathrm{partial}}$) and the time taken by a full delayed feedback racing procedure (\textit{i.e.}, $t_{\mathrm{full}}$). We evaluate performance as a function of the following problem parameters.

\paragraph{Number of arms.} To analyze the difference in performance as a function of the number of arms ($n$), we further consider two distribution of means. 

In the \textit{bounded means} case, we set the means of the arms as $\mu_i = c - (i/n)^{\tilde{c}}$ for any choice of constants $c$ and $\tilde{c}>0$. Hence, the range of the means does not vary with $n$. In Figure~\ref{fig:exp_bounded_means}, we observe that accounting for unbiased partial feedback can give gains of up to 25\% and 40\% for the sequential and parallel MAB when the number of arms is low. The gains are reduced when the number of arms is large, which suggests that partial feedback is less advantageous in scenarios where a large number of full pulls are required for disambiguating very closely spaced means. 

In the \textit{free means} case, we set the means of the arms as $\mu_i = c - \tilde{c}i$ for any choice of constants $c$ and $\tilde{c}>0$. Here, the range of the means increases with $n$. From the results in Figure~\ref{fig:exp_free_means}, we observe that the gains due to partial feedback improve as the number of arms increases. This suggests that when the relative separation in means between the arms is fixed, Algorithm~\ref{alg:seqracingnoisypf} and its parallel MAB extension quickly eliminate arms with extreme means (very high or very low) unlike the racing algorithms that wait for full delayed feedback.

\paragraph{Delay.} 
Here, we fix $n=100$ and vary the delay of the arms.
For all settings of the delay in Figure~\ref{fig:exp_delay}, Algorithm~\ref{alg:seqracingnoisypf}  and its parallel MAB extension require a significantly lower fraction of the time with the lowest ratios observed to be $0.59$ and $0.57$ for sequential and parallel MAB respectively. While we did not see much variation in improvements for sequential MAB, the improvements are better for longer delays in the case of parallel MAB.

\subsection{Policy search for fast battery charging}

For any given battery chemistry, the charging (and discharging) policy has a significant impact on the lifetime of the cells. 
However, a single run of a particular policy however takes months to complete since every cell needs to be repeatedly charged and discharged until the end of its lifetime. Hence, delayed feedback can significantly slow down the search procedure. The true, unknown reward for any arm (charging policy) is stochastic and corresponds to the lifetime of the battery~\citep{harris2017failure,baumhofer2014production,schuster2015lithium}.\footnote{Formally, the lifetime of the cell is defined to be the number of cycles until a battery reaches $80\%$ of its original capacity at which point a battery is considered dead.}

We model the search for the best charging policy  for the Li-ion battery chemistry as a best arm identification problem in a stochastic MAB with $n=40$ arms, $k=1$. The true mean cycle life, cell-to-cell variances, and delays are obtained from a battery charging simulator~\citep{moura2017battery,perez2016optimal}. While a battery cell undergoes charging and discharging, we can additionally monitor key indicators such as voltage, temperature, and internal resistance. Predictive models of lifetime based on these factors is an active area of research, and can serve the purpose of partial feedback estimator~\citep{burns2013predicting,dubarry2017state}. We assume the existence of such an estimator and test the robustness of our algorithm by evaluating the relative improvements obtained from Algorithm~\ref{alg:seqracingnoisypf} on varying the noise $\sigma_i^{(p)}$ associated with the partial feedback. The results are shown in Figure~\ref{fig:battery}. 
When the estimator is ``trustworthy" (low $\sigma_i^{(p)}$), we can achieve improvements of up to 35\% in the number of experiments required.
As expected, the gains diminish for poorer models of partial feedback in which case the algorithm can choose to ignore the noisy feedback. 

\subsection{Hyperparameter optimization for mixed integer programming}

The CPLEX solver\footnote{\url{https://www.ibm.com/software/commerce/optimization/cplex-optimizer/index.html}} for mixed integer programming has a host of hyperparameters, including options to switch on or off different \textit{cut} strategies employed by the solver during the search process. We model the task of finding the best cut strategy as a stochastic MAB problem with $n=32$ arms (\textit{i.e.}, cut strategies), $k=1$. The performance is measured on CORLAT, a benchmark set of $2,000$ (maximization) mixed integer linear programming instances derived from real world data used for the construction of a wildlife corridor for grizzly bears in the Northern Rockies region~\citep{gomes2008connections,hutter2010automated}. The true mean for each arm is the average of lower bounds attained by the cut strategy on the feasible instances in the dataset under specified time and resource constraints per instance ($10$ seconds on $1$ core). Every pull of an arm  corresponds to running a cut strategy on a sampled problem instance.

Instead of waiting for the solver to completely solve (or time out) a sampled problem instance, we can save computation by using partial feedback about the search process. In particular, the solver outputs the best integral lower bound (LB) and real valued upper bound (UB) found after executing each cut during search. The final output of the solver is the best lower bound. 
To obtain an unbiased partial feedback estimator, we use a training subset of $500$ instances to learn a linear model that predicts the final lower bound for a given input instance based on the intermediate lower and upper bounds. The best arm identification algorithms are tested on the remaining instances in the dataset. Conditioned on a problem instance, the uncertainty associated with the partial feedback, $\sigma_i^{(p)}$ is given by $(UB - LB)/2$ and shrinks with an increase in the time steps elapsed. Note that the delays are not fixed  and depend on both the cut strategy and the problem instance under consideration. We directly report the final results: the percentage reduction in time taken by the unbiased partial feedback scenarios over full delayed feedback is $80.8\%$ and $87.6\%$ for sequential and parallel MAB respectively stressing the importance of partial feedback for this particular application scenario.

\section{RELATED WORK}
Early work in pure exploration is attributed to \cite{bechhofer1958sequential} and \cite{paulson1964sequential} 
who studied this problem 
in the context of optimal experimental design.
Modern day literature can be categorized into either the \emph{fixed budget} or the \emph{fixed confidence} settings. Algorithms for the fixed budget setting strive to maximize the probability of identifying the top-$k$ arms~\citep{audibert2010best,bubeck2013multiple,kaufmann2015complexity}. In the fixed confidence setting, which is the one we consider in this paper, 
the goal is to minimize the number of pulls to attain a target confidence~\citep{maron1994hoeffding,bubeck2009pure}. See \citet{gabillon2012best} for a unified treatment of the two settings.

Algorithms for the fixed confidence setting can be broadly classified into racing style procedures which sample arms uniformly and eliminate sub-optimal arms~\citep{maron1994hoeffding,even2002pac} and the UCB/LUCB style procedures which adaptively sample arms without explicit elimination. We direct the reader to the excellent survey by \cite{jamieson2014best} that summarizes the major advancements in the analysis of the sample complexity of these algorithms. Algorithmic generalizations of the best arm identification include top-$k$ identification~\citep{heidrich2009hoeffding} and the parallel MAB settings for batch arm pulls~\citep{perchet2015batched, jun2016top,wu2015identifying} among others.

While the delayed feedback framework we propose is novel to the pure exploration problem, online learning with delays has been studied previously in the regret minimization setting~\citep{weinberger2002delayed,joulani2013online,desautels2014parallelizing}. 
In particular, algorithms designed particularly for hyperparameter optimization have enjoyed great success. \cite{krueger2015fast} proposes a modified cross-validation procedure performed on increasing subsets of data coupled with a sequential testing strategy to eliminate the poor parameter configurations early on. \cite{jamieson2016non} and \cite{li2016hyperband} recently proposed algorithms for hyperparameter optimization based on non-stochastic MAB. Here, the arms correspond to hyperparameter configurations, and a pull is equivalent to observing a fixed sequence of losses. 

For many real-world problems, we have access to a 
shared structure across arms that makes the overall problem amenable to Bayesian optimization techniques~\citep{snoek2012practical,eggensperger2013towards,snoek2015scalable,feurer2015initializing,mitchBOpt16,mitchBOpt16IPAC}. Combining the LIL bounds we proposed for noisy partial feedback with Bayesian multi-armed bandits~\citep{srinivas2010gaussian,krause2011contextual,hoffman2013exploiting} is a promising extension we are pursuing for our on-going real world application relating to efficient search of fast charging policies for Li-ion battery cells~\citep{ermon2012learning}.

\section{CONCLUSIONS}
We introduced a new general framework for pure exploration in stochastic multi-armed bandit problems with partial and delayed feedback. We provided efficient algorithms for solving specific instantiations of our framework that can naturally model real world scenarios, especially in the context of optimal experimental design. 
We leave as future work the problem of identifying information-theoretic lower bounds on the sample complexity of the new pure exploration problems we formulated. Extension of our framework to the fixed budget setting is another interesting direction for future work.
\section*{ACKNOWLEDGEMENTS}
We are thankful to Neal Jean and Daniel Levy for helpful comments on early drafts. This research has been supported by a Microsoft Research PhD fellowship in machine learning for the first author, NSF grants \#1651565, \#1522054, \#1733686, Toyota Research Institute, Future of Life Institute, Precourt Institute for Energy, and Intel.

\bibliographystyle{plainnat}
\bibliography{refs}

\onecolumn
\section*{Appendices}
\begin{appendices}
\section{Unbiased noisy partial feedback}
\subsection{Proposition~\ref{thm:lil_noisy}}
\begin{proof}
By Lemma~\ref{thm:lil_adaptive} applied to $X_{i,t_1}, X_{i,t_2} , \ldots$ for an arm $i$ for $F$ full delayed feedback, we have w.p. $1-\nicefrac{\delta_f}{n}$:
\begin{align}\label{eq:lil_biased_1}
\left \vert \frac{1}{F} \sum_{f=1}^F X_{i, t_f} - \mu_i \right \vert \leq C\left(\sigma_i, F, \nicefrac{\delta_f}{n}\right).
\end{align}
For any $a$, $E[Y_{i, t_F+p}\vert X_{i,t_F}=a] =a$, and $E[Y_{i, t_F+p} -a \vert X_{i,t_F}=a] =0$. Conditioned on $X_{i,t_F}=a$, $(Y_{i, t_F+p} -a) \vert (X_{i,t_F}=a)$ is sub-Gaussian by assumption.

Therefore, conditioned on $X_{i,t_F}=a$, 
by Lemma~\ref{thm:lil_adaptive} applied to $\frac{1}{P}\sum_{p=1}^P (Y_{i, t_F+p}|X_{i,t_F}=a) - a$ for an arm $i$ computed using $P$ partial feedback for the $F$-th pull, we have w.p. $1-\nicefrac{\delta_p}{n}$:
\begin{align}\label{eq:lil_biased_44}
\left \vert \frac{1}{P} \sum_{p=1}^P Y_{i, t_F+p} - a \right \vert \leq C\left(\sigma_i^{(p)}, P, \nicefrac{\delta_p}{n}\right).
\end{align}
Given that the result does not depend on the value $a$, we have:
\begin{align}\label{eq:lil_biased_2}
\left \vert \frac{1}{P} \sum_{p=1}^P \left(Y_{i, t_F+p}|X_{i,t_F}\right) - X_{i,t_F} \right \vert \leq C\left(\sigma_i^{(p)}, P, \nicefrac{\delta_p}{n}\right).
\end{align}

From a union bound Eq.~\eqref{eq:lil_biased_1} and Eq.~\eqref{eq:lil_biased_2}, we have w.p. $1-\nicefrac{\delta_f}{n}-\nicefrac{\delta_p}{n}$:
\begin{align}\label{eq:lil_noisy_4}
\left \vert \frac{1}{F}\left[ \sum_{f=1}^{F-1} X_{i, t_f} + \frac{1}{P} \sum_{p=1}^P Y_{i, t_F+p}\right]- \mu_i \right \vert \leq C\left(\sigma_i, F, \nicefrac{\delta_f}{n}\right) + \frac{1}{F} C\left(\sigma_i^{(p)}, P, \nicefrac{\delta_p}{n}\right).
\end{align}

Union bounding Eq.\eqref{eq:lil_noisy_4} over all arms, we have w.p. $1-\delta_f-\delta_p$:
\begin{align}\label{eq:lil_noisy_5}
\left \vert \frac{1}{F}\left[ \sum_{f=1}^{F-1} X_{i, t_f} + \frac{1}{P} \sum_{p=1}^P Y_{i, t_F+p}\right]- \mu_i \right \vert \leq C\left(\sigma_i, F, \nicefrac{\delta_f}{n}\right) + \frac{1}{F} C\left(\sigma_i^{(p)}, P, \nicefrac{\delta_p}{n}\right) \; \forall i \in [1,n]
\end{align}
finishing the proof.
\end{proof}

\subsection{Theorem~\ref{thm:seqracingnoisypf_sample}}

At any given time $t\geq 1$, $F \in \mathbb{N}, P\in [1, D_F]$, we observe $F-1$ full feedback, $X_{i, t_{1:F-1}}$ for an arbitrary arm $i \in [1,n]$. Accordingly, we have the following two cases to consider as per Algorithm~\ref{alg:seqracingnoisypf}. 
\begin{itemize}
\item \textbf{Case (a):} $C(\sigma_i, F-1, \nicefrac{\delta}{n}) < C(\sigma_i, F, \nicefrac{\delta_f^\ast}{n}) + \frac{1}{F} C(\sigma^{(p)}_i, P, \nicefrac{\delta_p^\ast}{n})$
\begin{align*}
\widehat{\mu}_{i}&=\frac{1}{F-1}\sum_{f=1}^{F-1} X_{i,t_f} \\
C_{i} &= C\left(\sigma_i, F-1, \nicefrac{\delta}{n}\right).
\end{align*}
\item \textbf{Case (b):} otherwise
\begin{align*}
\widehat{\mu}_{i}&=\frac{1}{F}\left[\sum_{f=1}^{F-1} X_{i,t_f} +  \frac{1}{P}\sum_{l=1}^P Y_{i, t_{F}+l} \right] \\
C_{i} &= C(\sigma_i, F, \nicefrac{\delta_f^\ast}{n}) + \frac{1}{F} C(\sigma^{(p)}_i, P, \nicefrac{\delta_p^\ast}{n}).
\end{align*}
\end{itemize}

Define $\mathcal{E}_i =\{\forall t \geq 1, \vert \widehat{\mu}_{i} - \mu_i \vert\leq C_i \}$ be the event that the lower and upper confidence bounds of arm $i$ trap the true mean $\mu_i$ for all $t \geq 1$ where $\widehat{\mu}_{i}$ and $C_i$ are chosen as described above at time $t$. Let $S_t, A_t, R_t$ denote the set of surviving, accepted, and rejected arms at time $t$. We can then state and prove the following lemma.

\begin{lemma}\label{thm:seqracingdf_correctness}
Assume $\mathcal{E}_i$ holds for an arbitrary arm $i \in S_t$ and $i \not \in S_{t+1}$. Then, the following statements hold:
\begin{itemize}
\item $i \in A_{t+1}$ if $i \leq k$.
\item $i \in R_{t+1}$ if $i > k$.
\end{itemize}
\end{lemma}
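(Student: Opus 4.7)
The plan is to prove the lemma by contrapositive, working inside an induction on $t$. Since $i \in S_t$ but $i \notin S_{t+1}$, the $\mathrm{UpdateArmSets}$ step at time $t+1$ must have placed $i$ into $A_{t+1}$ or $R_{t+1}$; it therefore suffices to rule out the wrong classification in each of the two cases. A preliminary step is to strengthen the hypothesis to the event that $\mathcal{E}_j$ holds simultaneously for every arm $j \in [1,n]$, which is exactly what Proposition~\ref{thm:lil_noisy} delivers after its union bound, in combination with the tighter-of-two mean/confidence choice made in lines~\ref{line:partial_start}--\ref{line:partial_end} of Algorithm~\ref{alg:seqracingnoisypf}. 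The consequence I will use repeatedly is $\mu_j \in [LCB_j, UCB_j]$ for every arm at every time step.

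Alongside this I carry the inductive invariant $A_t \subseteq \{1,\ldots,k\}$ and $R_t \subseteq \{k+1,\ldots,n\}$ (all accepted arms are genuinely top-$k$; all rejected arms are genuinely outside the top-$k$). The base case $A_0 = R_0 = \emptyset$ is immediate, and the inductive step at time $t \to t+1$ is precisely the conclusion of the lemma applied to every arm that left $S_t$; so the invariant and the lemma are proved together by a single induction on $t$.

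For the case $i \leq k$, assume for contradiction $i \in R_{t+1}$. The racing rule gives $UCB_i < \max^{(k_t)}_{j \in S_t} LCB_j$, so at least $k_t$ arms $j \in S_t$ satisfy $LCB_j > UCB_i$; the $\mathcal{E}_j$'s then force $\mu_j > \mu_i$, i.e. $j < i$. Hence $|S_t \cap \{1,\ldots,i-1\}| \geq k_t$. The invariant gives $R_t \cap \{1,\ldots,i-1\} = \emptyset$ (since $i-1 < k$), so a short counting argument using $|A_t| = k - k_t$ yields $|A_t \cap \{i+1,\ldots,k\}| \geq k - i + 1$. This contradicts the trivial bound $|A_t \cap \{i+1,\ldots,k\}| \leq k - i$, which holds because $i \in S_t$ rules out $i \in A_t$.

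The case $i > k$ is symmetric. Suppose $i \in A_{t+1}$; the acceptance rule $LCB_i > \max^{(k_t+1)}_{j \in S_t} UCB_j$ together with the self-consistency $LCB_i \leq UCB_i$ shows that $i$ itself lies strictly above the cutoff, so the $|S_t| - k_t$ arms $j \in S_t$ with $UCB_j < LCB_i$ are all distinct from $i$; the $\mathcal{E}_j$'s then give $\mu_j < \mu_i$, i.e. $j > i$. Combining with the invariant $A_t \cap \{i+1,\ldots,n\} = \emptyset$ via the dual counting identity forces $|R_t \cap \{k+1,\ldots,i\}| \geq i - k$, which is impossible since $i \in S_t$ restricts $R_t \cap \{k+1,\ldots,i\} \subseteq \{k+1,\ldots,i-1\}$ to at most $i - k - 1$ elements. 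The only nontrivial step is the bookkeeping in these combinatorial inequalities; the structural ingredients — each $\mu_j$ pinned inside $[LCB_j, UCB_j]$ by $\mathcal{E}$, an induction that simultaneously propagates correctness of $A_t$ and $R_t$, and two symmetric counting contradictions — are all forced once the setup is in place.
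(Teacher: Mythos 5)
Your proof is correct, and its core is the same as the paper's: rule out the wrong classification by contradiction, using that each confidence interval traps its true mean (note that, like you, the paper's argument really needs the event for arms other than $i$ as well --- your explicit strengthening to $\cap_j \mathcal{E}_j$ is consistent with how the lemma is invoked in the proof of Theorem~\ref{thm:seqracingnoisypf_sample}, where the union bound over all arms is taken anyway). Where you genuinely diverge is in the handling of the adaptive threshold $k_t = k - \vert A_t\vert$: the paper's proof writes the rejection test against the $k$-th largest LCB over $S_t$ and concludes immediately from $\max^{(k)}_{j\in S_t} LCB_j \le \mu_k$ that $\mu_i < \mu_k$, which implicitly treats the threshold as $k$ (equivalently, presumes the already-accepted arms are genuine top-$k$ arms without saying so); your version makes that presumption explicit as the invariant $A_t \subseteq \{1,\ldots,k\}$, $R_t \subseteq \{k+1,\ldots,n\}$, proves it simultaneously with the lemma by induction on $t$, and closes each case with a counting argument ($\vert A_t \cap \{i+1,\ldots,k\}\vert \ge k-i+1$ versus the trivial bound $k-i$, and its dual for rejection). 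Both of your counting steps check out, including the observation that self-consistency $LCB_i \le UCB_i$ excludes $i$ itself from the comparison set. What the paper's route buys is brevity; what yours buys is an argument that matches the $k_t$-based rule actually implemented in Algorithm~\ref{alg:racingsubroutines} and makes the dependence on the correctness of past acceptances/rejections explicit rather than implicit.
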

\begin{proof}

By definition, $S_{t+1} \cup A_{t+1} \cup R_{t+1} = S_{t}$. Recursing over $t, t-1, ...0$, we note that $S_{t+1} \cup A_{t+1} \cup R_{t+1} = \{1,2,\ldots,n \}$. Since the lemma assumes that arm $i\not \in S_{t+1}$, either $i \in A_{t+1}$ or $i \in R_{t+1}$. 

We will prove the first statement of the lemma by contradiction. For an arbitrary $i \leq k$, let us assume $i \in R_{t+1}$. This implies that $UCB_i < \max_{j \in S_t}^{(k)} LCB_j$. Since by assumptions on the lemma the lower and upper confidence bounds of any arm trap its true mean, we have $UCB_i  \geq \mu_i$ and $\max_{j \in S_t}^{(k)} LCB_j \leq \mu_k$. Hence, we obtain $\mu_i < \mu_k$ which is a contradiction since $i\leq k$. The second statement holds true by symmetry.
\end{proof}

Since both Proposition~\ref{thm:lil_noisy} and Eq.~\eqref{eq:lil_full_delayed}
hold true w.p. at least $1-\delta/n$ for all arms, we get that $\cap_{i=1}^n \mathcal{E}_i$ holds true w.p. at least $1-\delta$ (union bound) regardless of the set of $\{\widehat{\mu}_i\}_{i=1}^n$ and $\{C_i\}_{i=1}^n$ picked by the algorithm. Combining the union bound with Lemma~\ref{thm:seqracingdf_correctness}, the algorithm outputs the top-$k$ set w.p. at least $1-\delta$ if it terminates.

\section{Biased noisy partial feedback}
\subsection{Proposition~\ref{thm:lil_noisy_biased}}
\begin{proof}
By Lemma~\ref{thm:lil_adaptive} applied to $X_{i,t_1}, X_{i,t_2} , \ldots$ for an arm $i$ for $F$ full delayed feedback, we have w.p. $1-\nicefrac{\delta_f}{n}$:
\begin{align}\label{eq:lil_noisy_biased_1}
\left \vert \frac{1}{F} \sum_{f=1}^F X_{i, t_f} - \mu_i \right \vert \leq C\left(\sigma_i, F, \nicefrac{\delta_f}{n}\right).
\end{align}

For any $a$, $E[Y_{i, t_F+p}\vert X_{i,t_F}=a] =a + b_i$, and $E[Y_{i, t_F+p} -a-b_i|X_{i,t_F}=a] =0$. Conditioned on $X_{i,t_F}=a$, $(Y_{i, t_F+p} -a-b_i)\vert (X_{i,t_F}=a)$ is sub-Gaussian by assumption.
Therefore, conditioned on $X_{i,t_F}=a$, by Lemma~\ref{thm:lil_adaptive} applied to $\frac{1}{P} \sum_{p=1}^P \left(Y_{i, t_F+p}-b_i\right) - X_{i,t_F}$ for the (incomplete) $F$-th pull of an arm $i$ with $P$ partial feedback, we have w.p. $1-\nicefrac{\delta_p}{n}$:
\begin{align}\label{eq:lil_noisy_biased_2}
\left \vert \frac{1}{P} \sum_{p=1}^P \left(Y_{i, t_F+p}-b_i\right) - X_{i,t_F} \right \vert \leq C\left(\sigma_i^{(p)}, P, \nicefrac{\delta_p}{n}\right).
\end{align}

Now, consider the $F-1$ random variables for all $f \in [1, F-1]$:
\begin{align}\label{eq:rv_noisy_biased_1}
\frac{\sum_{p=1}^{D_f-1} Y_{i, t_f+p}}{D_f-1} - X_{i,f}.
\end{align}
The random variables in \eqref{eq:rv_noisy_biased_1} are all sub-Gaussian with mean $b_i$ and scale parameter $\sigma_i^{(p)}$. Hence, applying LIL on these random variables conditioning on $b_i$, we have w.p. $1-\nicefrac{\delta_b}{n}$:
\begin{align}\label{eq:lil_noisy_biased_3}
\left \vert \frac{1}{F-1} \sum_{f=1}^{F-1} \left(\frac{\sum_{p=1}^{D_f-1} Y_{i, t_f+p}}{D_f-1} -  X_{i, D_f-1}\right) - b_i \right \vert \leq C\left(\sigma_i^{(p)}, F-1, \nicefrac{\delta_b}{n}\right).
\end{align}

From a union bound of Eq.~\eqref{eq:lil_noisy_biased_1} and Eq.~\eqref{eq:lil_noisy_biased_2}, we have w.p. $1-\nicefrac{\delta_f}{n}-\nicefrac{\delta_p}{n}$:
\begin{align}\label{eq:lil_noisy_biased_4}
\left \vert \frac{1}{F}\left[ \sum_{f=1}^{F-1} X_{i, t_f} + \frac{1}{P} \sum_{p=1}^P \left(Y_{i, t_F+p} -b_i \right)\right]- \mu_i \right \vert \leq C\left(\sigma_i, F, \nicefrac{\delta_f}{n}\right) + \frac{1}{F} C\left(\sigma_i^{(p)}, P, \nicefrac{\delta_p}{n}\right).
\end{align}

From a union bound of Eq.~\eqref{eq:lil_noisy_biased_3} and Eq.~\eqref{eq:lil_noisy_biased_4}, we have w.p. $1-\nicefrac{\delta_f}{n}-\nicefrac{\delta_p}{n}-\nicefrac{\delta_b}{n}$:
\begin{align}\label{eq:lil_noisy_biased_5}
\left \vert \frac{1}{F}\left[ \sum_{f=1}^{F-1} X_{i, t_f} + \frac{1}{P} \sum_{p=1}^P \left(Y_{i, t_F+p} - \frac{1}{F-1}\sum_{f=1}^{F-1}\left(\frac{\sum_{p=1}^{D_f-1} Y_{i, t_f+p}}{D_f-1} -  X_{i, D_f-1}\right) \right)\right]- \mu_i \right \vert \nonumber \\
\leq C\left(\sigma_i, F, \nicefrac{\delta_f}{n}\right) + \frac{1}{F} C\left(\sigma_i^{(p)}, P, \nicefrac{\delta_p}{n}\right) + \frac{1}{F} C\left(\sigma_i^{(p)}, F-1, \nicefrac{\delta_b}{n}\right).
\end{align}

Finally, union bounding Eq.~\eqref{eq:lil_noisy_biased_5} over all arms, 
we have w.p. $1-\delta_f-\delta_p-\delta_b$:
\begin{align}\label{eq:lil_noisy_biased_6}
\left \vert \frac{1}{F}\left[ \sum_{f=1}^{F-1} X_{i, t_f} + \frac{1}{P} \sum_{p=1}^P \left(Y_{i, t_F+p} - \frac{1}{F-1}\sum_{f=1}^{F-1}\left(\frac{\sum_{p=1}^{D_f-1} Y_{i, t_f+p}}{D_f-1} -  X_{i, D_f-1}\right) \right)\right]- \mu_i \right \vert \nonumber \\
\leq C\left(\sigma_i, F, \nicefrac{\delta_f}{n}\right) + \frac{1}{F} C\left(\sigma_i^{(p)}, P, \nicefrac{\delta_p}{n}\right) + \frac{1}{F} C\left(\sigma_i^{(p)}, F-1, \nicefrac{\delta_b}{n}\right) \; \forall i \in [1, n]
\end{align}
finishing the proof.
\end{proof}

\subsection{Algorithm}

\begin{algorithm}[t]
   \caption{RacingBiasedPF (arm parameters $\{i, \sigma_i, \sigma^{(p)}_i\}_{i=1}^n$, top $k$, confidence $\delta$)}
   \label{alg:biased_seqracingnoisypf}
\begin{algorithmic}[1]
\State Initialize global time step $t=0$, surviving $S=\{i\}_{i=1}^n$, accepted $A=\{\}$, rejected $R=\{\}$.
\State Initialize per-arm full delayed feedback counter $F_i=0$, empirical means $\hat{\mu}_{i}=0$,  confidence bounds $LCB_i=-\infty$,  $UCB_i=\infty$ for all $i \in S$. 
\While{$S$ is not empty}
\While {$\mathrm{True}$}
\State Increment $t \leftarrow t+1$.
\State Collect partial feedback $Y_{a, t}$.
\State\label{line:biased_mean}  Update $\widehat{\mu}^{(p)}$ using $Y_{a, t}$ as per Proposition~\ref{thm:lil_noisy_biased}. 
\State Increment $P \leftarrow P + 1$. 
\State\label{line:biased_cb} Set $C^{(partial)} \leftarrow 
C\left(\sigma_a, F_a + 1, \nicefrac{\delta_f^\ast}{n}\right) + \frac{1}{F_a+1} \left [C\left(\sigma_a^{(p)}, P, \nicefrac{\delta_p^\ast}{n}\right) 
+ C\left(\sigma_a^{(p)}, F_a, \nicefrac{\delta_b^\ast}{n}\right)\right ]$
\State\label{line:biased_partial_start} Choose $\mathrm{FOrP} \leftarrow \mathrm{arg}\min \left(C(\sigma_a, F_a, \nicefrac{\delta}{n}), C^{(partial)}\right)$.
\State Update $C_a \leftarrow C(\sigma_a, F_a, \nicefrac{\delta}{n})$ if $\mathrm{FOrP}=F$ else $C^{(partial)}$.
\State Update $\widehat{\mu}_a \leftarrow  \widehat{\mu}^{(f)}$ if $\mathrm{FOrP}=F$ else $\frac{F_a\widehat{\mu}^{(f)} + \widehat{\mu}^{(p)}}{F_a + 1}$.
\State\label{line:biased_partial_end} Update $LCB_a, UCB_a$.
\State\label{line:biased_racing_elimination} $A, R, S \leftarrow \mathrm{UpdateArmSets}(A, R, S, k, \{LCB_i, UCB_i)\}_{i \in S})$.
\If {$P=D_{a, t_a}$ or $a \not \in S$}\label{line:biased_end_delay} 
\State Break \Comment{Pull on  termination/elimination}
\EndIf
\EndWhile 
\State\label{line:biased_get_new_arm} Pull arm $a$ where $a \leftarrow \mathrm{arg} \min_{a \in S} F_a$.
\State Initialize start $t_a\leftarrow t$, partial feedback counter $P = 0$, partial mean $\widehat{\mu}^{(p)} = 0$, full mean $\widehat{\mu}^{(f)} \leftarrow \widehat{\mu}_i$.
\EndWhile
\State \Return $A$
\end{algorithmic}
\end{algorithm}
We provide the pseudocode for the racing procedures with biased partial feedback in Algorithm~\ref{alg:biased_seqracingnoisypf}. As discussed previously, the algorithm is similar to Algorithm~\ref{alg:seqracingnoisypf} with key differences in the mean and confidence bound estimators in Line~\ref{line:biased_mean} and Line~\ref{line:biased_cb} respectively.

\end{appendices}

\end{document}